\def\MODE{1} 
\newcommand{\COND}[2]{#1}
\newcommand{\COND}[2]{#2}
\newcommand{\kernel}{\upkappa}
\newcommand*\samethanks[1][\value{footnote}]{\footnotemark[#1]}
\newtheorem{theorem}{Theorem}[section]
\newtheorem{lemma}[theorem]{Lemma}
\newtheorem{proposition}[theorem]{Proposition}
\newcommand{\vct}[1]{#1}
\newcommand{\mtx}[1]{#1}
\DeclareMathOperator*{\argmin}{arg\!\min}
\DeclareMathOperator*{\rank}{rank}
\DeclareMathOperator*{\Span}{span}
\DeclareMathOperator*{\diag}{diag}
\newcommand{\Tr}{\mathrm{Tr}}
\newcommand{\beq}{\begin{equation}}
\newcommand{\eeq}{\end{equation}}
\newcommand{\beqs}{\begin{equation*}}
\newcommand{\eeqs}{\end{equation*}}
\newcommand{\nystrom}{Nystr\"{o}m}
\newcommand{\R}{\mathbb{R}}
\newcommand{\E}{\mathbb{E}}
\newcommand{\X}{\mathcal{X}}
\newcommand{\HS}{\mathcal{H}}
\newcommand{\ind}{\mathbf{1}}
\newcommand{\ip}[2]{\ensuremath{\langle #1, #2 \rangle}}
\newcommand{\norm}[1]{\lVert #1 \rVert}
\newcommand{\abs}[1]{\ensuremath{| #1 |}}
\newcommand{\T}{\mathsf{T}}
\renewcommand{\Pr}{\mathbb{P}}
\newcommand{\citesup}{\cite}
\title{Large Scale Kernel Learning using Block Coordinate Descent}
\author{
  Stephen Tu~\thanks{Department of Electrical Engineering and Computer Science, UC Berkeley, Berkeley, CA.} \quad
  Rebecca Roelofs~\samethanks[1] \quad
  Shivaram Venkataraman~\samethanks[1] \quad
  Benjamin Recht~\samethanks[1]~\thanks{Department of Statistics, UC Berkeley, Berkeley, CA.}
}
\date{\today}
\def\abovestrut#1{\rule[0in]{0in}{#1}\ignorespaces}
\def\belowstrut#1{\rule[-#1]{0in}{#1}\ignorespaces}
\def\abovespace{\abovestrut{0.20in}}
\def\belowspace{\belowstrut{0.10in}}
\begin{document}

\maketitle
\thispagestyle{empty}

\begin{abstract}
We demonstrate that distributed block coordinate descent can quickly solve
kernel regression and classification problems with millions of data points.
Armed with this capability, we conduct a thorough comparison between the full kernel, the \nystrom{}
method, and random features on three large classification tasks from various domains. 
Our results suggest that the \nystrom{} method
generally achieves better statistical accuracy than random features, but can require significantly
more iterations of optimization.
Lastly, we derive new rates for block coordinate descent which support our experimental findings
when specialized to kernel methods.
\end{abstract}

\section{Introduction}
\label{sec:intro}

Kernel methods are a powerful tool in machine learning, allowing one to
discover non-linear structure by mapping data into a higher dimensional,
possibly infinite, feature space.  
However, a known issue is that kernel methods do not scale favorably with
dataset size.  For instance, a na{\"{i}}ve implementation of a kernel least
squares solver requires $O(n^2)$ space and $O(n^3)$ time to store and invert
the full kernel matrix. The prevailing belief is that when $n$ reaches the
millions, kernel methods are impractical. 

This paper challenges the conventional wisdom by pushing kernel methods
to the limit of what is practical on modern distributed compute platforms.
We show that approximately solving a full kernel least squares problem with
$n=2 \times 10^6$ can be done in a matter of hours, and the resulting model
achieves competitive performance in terms of classification errors.
Mimicking the successes of the early 2000s, our algorithm is based on block coordinate descent and
avoids full materialization of the kernel matrix~\cite{joachims1999svmlight,fan05}. 

Furthermore, in contrast to running multiple iterations in parallel and aggregating updates~\cite{agarwal11,
niu11, zinkevich11, jaggi14, liu2015asynchronous}, we exploit distributed
computation to parallelize \emph{individual iterations} of block coordinate descent.
We deliberately make this choice to alleviate communication overheads.
Our resulting implementation inherits the linear convergence of 
block coordinate descent while efficiently scaling up to 1024 cores on 128 machines.

The capability to solve full kernel systems allows us to perform a
direct head-to-head empirical comparison between popular kernel approximation
techniques and the full kernel at an unprecedented scale. We conduct a thorough study of random features
\cite{rahimi07} and \nystrom{} \cite{williams2001using} approximations on three
large datasets from speech, text, and image classification domains.  Extending
prior work comparing kernel approximations~\cite{yang12},  our study is the first to
work with multi-terabyte kernel matrices and to quantify
computational versus statistical performance tradeoffs between the two methods
at this scale. More specifically, we identify situations where the \nystrom{}
system requires significantly more iterations to converge than a random
features system of the same size, but yields a better estimator when it does.

Finally, motivated by the empirical effectiveness of primal block coordinate
descent methods in our own study and in related work that inspired our
investigations~\cite{huang2014kernel}, we derive a new rate of convergence for
block coordinate descent on strongly convex smooth quadratic functions. Our
analysis shows that block coordinate descent has a convergence rate that is no
worse than gradient descent plus a small additive factor which is inversely
proportional to the block size.  Specializing this result to random features,
\nystrom{}, and kernel risk minimization problems corroborates our experimental
findings regarding the iteration complexity of the three methods.

\section{Background}
This section concisely overviews the techniques used in this paper, and more importantly defines the
specific optimization problems we solve.
The theoretical underpinnings of kernel methods and their various approximations are
well established in the literature; see e.g. \cite{scholkopf01} for a thorough
treatment.

\paragraph{Notation.} For a vector $\vct{x}$, we let $\norm{\vct{x}}$ denote
the Euclidean norm.  For a matrix $\mtx{X}$, we let $\norm{\mtx{X}}$
denote the operator norm, $\norm{\mtx{X}}_F$ the
Frobenius norm, and $\sigma_1(X) \geq \sigma_2(X) \geq ... \geq
\sigma_{r}(X) > 0$ denote the singular values of $X$ in decreasing order,
where $r = \rank(X)$.  If $X$ is symmetric, let $\lambda_{\max}(X),
\lambda_{\min}(X)$ denote the maximum and minimum eigenvalues of $X$,
respectively.  For two conforming matrices $\mtx{A}$ and $\mtx{B}$,
$\ip{\mtx{A}}{\mtx{B}} := \Tr(\mtx{A}^\T \mtx{B})$.

Finally, given a matrix $X \in \R^{n_1 \times n_2}$ and two index sets $I \in 2^{[n_1]},
J \in 2^{[n_2]}$, we let $X(I, J) \in \R^{\abs{I} \times \abs{J}}$ denote the
submatrix of $X$ which selects out the rows in $I$ and the columns in $J$.

\subsection{Kernel methods and approximations}
\label{sec:background:kernelmethods}
 Let
$\HS$ be a reproducing kernel Hilbert space (RKHS) of functions $f : \X
\rightarrow \R$, with associated Mercer kernel $\kernel : \X \times \X \rightarrow
\R$. We typically associate $\X$ with $\R^d$.

Given a set of data points $\{(\vct{x_i}, y_i)\}_{i=1}^{n}$ with $\vct{x_i} \in \X$
and $y_i \in \{1, 2, ..., k\}$, we use the standard one-versus-all (OVA)
approach \cite{rifkin04} to turn a multiclass classification problem into
$k$ binary classification problems of the form
\beq
  \min_{f_j \in \HS} \frac{1}{n} \sum_{i=1}^{n} \ell( f_j(\vct{x_i}), y_{ij} ) + \lambda \norm{f_j}_\HS^2, \;\; j=1, ..., k \:, \label{eq:classification}
\eeq
where $y_{ij}$ is $1$ if $y_i=j$ and $-1$ otherwise. While in general
$\ell$ can be any convex loss function, we focus on the square loss
$\ell(a, b) = (a-b)^2$ to make the algorithmic and systems comparisons more transparent.
While other loss functions like softmax, logistic, or hinge losses are frequently used for 
classification, regularized least squares classification performs as well in most scenarios ~\cite{rifkin2002everything,
rahimi07, agarwal2013least}.  Furthermore, 
a least squares solver can be bootstrapped into a minimizer for general loss functions 
with little additional cost
using a splitting method such as ADMM \cite{boyd2011distributed, zhang2014materialization}.

Owing to the representer theorem, minimization over $\HS$ in
\eqref{eq:classification} is equivalent to minimization over $\HS_n$, where
$\HS_n := \Span\{ \kernel(\vct{x_i}, \cdot) : i=1,...,n \}$.  Therefore, defining
$\mtx{K} \in \R^{n \times n}$ as $\mtx{K}_{ij} := \kernel(\vct{x_i}, \vct{x_j})$, we
can write \eqref{eq:classification} as
\beq
  \min_{ \mtx{\alpha} \in \R^{n \times k} } \frac{1}{n} \norm{ \mtx{K}\mtx{\alpha} - \mtx{Y} }_F^2 + \lambda \ip{\mtx{\alpha}}{\mtx{K}\mtx{\alpha}} \:, \label{eq:kernells}
\eeq
where $\mtx{Y} \in \R^{n \times k}$ is a label matrix
with $\mtx{Y}_{ij} = 1$ if $y_i = j$ and $-1$ otherwise. 
The normal equation of \eqref{eq:kernells} is
\beqs
  K(K+n\lambda I_n)\alpha = KY \:. 
\eeqs
Solutions of
\eqref{eq:kernells} take on the form
$\mtx{\alpha}_* = (\mtx{K} + n\lambda \mtx{I}_n)^{-1} \mtx{Y} + \mtx{Q}$,
with $Q \in \R^{n\times k}$ satisfying $\mtx{K}\mtx{Q} = 0_{n \times k}$.
The resulting $f$ is $f(\vct{x}) = (\kernel(\vct{x}, \vct{x_1}), ..., \kernel(\vct{x}, \vct{x_n}))^\T \mtx{\alpha_*} \in \R^{1 \times k}$.

\paragraph{\nystrom{} method.}
Let $I \in 2^{[n]}$ denote an index set of size $p$, and let $\mtx{K}_I := K([n],I)$, $\mtx{K}_{II} := K(I, I)$.
One common variant of the \nystrom{} method
\cite{williams2001using,drineas05,gittens13,bach05} is to use
the matrix $\widehat{K}:= K_I K_{II}^\dag K_I^\T$ as a low rank approximation to $K$
\eqref{eq:kernells}. An alternative approach is to first 
replace the minimization over $\HS$ in \eqref{eq:classification} with $\HS_I$,
where $\HS_I := \Span\{ \kernel(\vct{x_i}, \cdot) : i \in I \}$.
We then arrive at the optimization problem
\beq
  \min_{ \mtx{\alpha} \in \R^{p \times k} } \frac{1}{n} \norm{ \mtx{K}_I\mtx{\alpha} - \mtx{Y} }_F^2 + \lambda \ip{\mtx{\alpha}}{\mtx{K}_{II}\mtx{\alpha}} \:, \label{eq:kernelnystrom}
\eeq
with $f(\vct{x}) = (\kernel(\vct{x}, \vct{x_{I(1)}}), ..., \kernel(\vct{x}, \vct{x_{I(p)}}))^\T \mtx{\alpha_*} \in \R^{1 \times k}$. The normal equation for \eqref{eq:kernelnystrom} is
\beqs
  (\mtx{K}_I^\T \mtx{K}_I + n\lambda \mtx{K}_{II}) \mtx{\alpha} = \mtx{K}_I^\T \mtx{Y} \:,
\eeqs
and hence solutions take on the form
$\mtx{\alpha_*} = (\mtx{K}_I^\T \mtx{K}_I + n\lambda \mtx{K}_{II})^\dag \mtx{K}_I^\T \mtx{Y} + \mtx{Q}$,
with $Q \in \R^{p \times k}$ satisfying $\mtx{K}_I \mtx{Q} = 0_{p \times k}$.
For numerical stability reasons, one might pick a small $\gamma > 0$ and solve instead
\beqs
  (\mtx{K}_I^\T \mtx{K}_I + n\lambda \mtx{K}_{II} + n\lambda \gamma \mtx{I}_{p} )\mtx{\alpha} = \mtx{K}_I^\T \mtx{Y} \:.
\eeqs
This extra regularization is justified statistically
by \cite{bach13,alaoui15}.

\paragraph{Random features.}
Random feature based methods \cite{rahimi07} use an element-wise approximation of $\mtx{K}$.
Suppose that $(\Omega, \rho)$ is a measure space and 
$\varphi : \X \times \Omega \rightarrow \R$ is a measurable function such that for all
$\vct{x}, \vct{y} \in \X$,
$\E_{\omega \sim \rho} \varphi(\vct{x}, \omega) \varphi(\vct{y}, \omega) = \kernel(\vct{x}, \vct{y})$.
Random feature approximations works by drawing $\omega_1, ..., \omega_p
\stackrel{\mathrm{iid}}{\sim} \rho$ and defining the map $z : \X \rightarrow
\R^p$ as
\beqs
  z(\vct{x}) := \frac{1}{\sqrt{p}} (\varphi(\vct{x}, \omega_1), ..., \varphi(\vct{x}, \omega_p)) \:.
\eeqs
The optimization of $f$ in \eqref{eq:classification} is then restricted to the
space $\HS_{\rho} := \Span\{ z(\vct{x_i})^\T z(\cdot) : i=1,...,n \}$.  Define
$\mtx{Z} \in \R^{n \times p}$ as $\mtx{Z} := (z(\vct{x_1}), ...,
z(\vct{x_n}))^\T$. 
Applying the same argument as before followed by an appropriate
change of variables,
we can solve the program in primal form
\beq
  \min_{\mtx{w} \in \R^{p \times k}} \frac{1}{n} \norm{\mtx{Z} \mtx{w} - \mtx{Y}}_F^2 + \lambda \norm{\mtx{w}}_F^2 \:. \label{eq:randomfeatures}
\eeq
The normal equation for \eqref{eq:randomfeatures} is
\beqs
  (Z^\T Z + n\lambda I_p) w = Z^\T Y \:,
\eeqs
and hence
$\mtx{w}_* = (\mtx{Z}^\T \mtx{Z} + n\lambda \mtx{I}_p)^{-1} \mtx{Z}^\T \mtx{Y}$
and $f(\vct{x}) = z(\vct{x})^\T \mtx{w}_* \in \R^{1 \times k}$.

Note that when $\X = \R^d$ and $\kernel(\vct{x}, \vct{y}) = \kernel(\norm{\vct{x} -
\vct{y}})$ is translation invariant, Bochner's theorem states that the
(scaled) Fourier transform of $\kernel(\cdot)$ will be a valid probability measure on $\R^d$.
The map $\varphi$ can then be constructed as
$\varphi(x, (\omega, b)) = \sqrt{2}\cos( x^\T \omega + b)$, where
$\omega$ is drawn from the Fourier transform of $\kernel(\cdot)$ and $b \sim \mathrm{Unif}([0, 2\pi])$.

\subsection{Related work}

An empirical comparison on \nystrom{} versus random features was done by Yang et
al.~\cite{yang12}. This study demonstrated that the \nystrom{} method outperformed
random features on every dataset in their experiments. Our experimental efforts differ from
this seminal work in several ways. First, we quantify time versus statistical
performance tradeoffs, instead of studying only the empirical risk minimizer.
Second, we describe a scalable algorithm which allows us to compare performance
with the full kernel. Finally, our datasets are significantly larger, and we
also sweep across a much wider range of number of random features.  

On the algorithms side, the inspiration for this work was by Huang et
al.~\cite{huang2014kernel}, who devised a similar block coordinate descent
algorithm for solving random feature systems. In this work, we extend the block
coordinate algorithm to both the full kernel and \nystrom{} systems.
This enables us to train the full kernel on the entire TIMIT dataset, 
achieving a lower test error than the random feature approximations.

\section{Algorithms}
\label{sec:algorithms}

The optimal solutions written in Section~\ref{sec:background:kernelmethods}
require solving large linear systems where the data cannot be assumed to fit
entirely in memory. This necessitates a different algorithm than the least
squares solvers implemented in standard library routines.  Fortunately, for the
statistical problems we are interested in, obtaining a high accuracy solution
is not as important.  Hence, we propose to use block coordinate descent
\cite{bertsekas1989parallel}, which admits a natural distributed
implementation, and, in our experience, converges to a reasonable accuracy
after only a few passes through the data.

\paragraph{Coordinate methods in machine learning.}
Coordinate methods in machine learning date back to
the late 90s with SVMLight \cite{joachims1999svmlight} and SMO \cite{platt98}.
More recently, many researchers \cite{yang13,richtarik13,jaggi14,ma15} have proposed using
distributed computation to run multiple iterations of coordinate descent in parallel.
As noted previously, we take a different approach and use distributed computing
to accelerate within an iteration.  This is similar to \cite{hsieh08,yu10},
both who describe block coordinate algorithms for solving SVMs.  However, using
the square loss instead of hinge loss simplifies our analysis and
implementation.

\subsection{Block coordinate descent}
We first describe block coordinate descent generically and then specialize it for the
least squares loss. Let $f : \R^d \rightarrow \R$ be a twice
differentiable strongly convex, smooth function, and let $b \in \{1, ..., d\}$
denote a block size.  Let $I \in 2^{[d]}$ be an index set such that
$\abs{I} = b$, and let $P_I : \R^{d} \rightarrow \R^{d}$ be the projection
operator which zeros out all coordinates $j \not\in I$, leaving coordinates $i \in I$
intact. Block coordinate
descent works by iterating the mapping
\beqs
  \vct{w}^{\tau+1} \gets \vct{w}^{\tau} - \mtx{\Gamma}_\tau \cdot P_{I_\tau} \nabla f(\vct{w}^\tau) \:,
\eeqs
where $I_\tau$ is drawn at random by some sampling strategy (typically uniform),
and $\mtx{\Gamma}_\tau \in \R^{d \times d}$ is either fixed, or chosen by direct line search.
We choose the latter, in which case we write
\beq
  \vct{w}^{\tau+1} \gets \argmin_{\vct{w} \in \R^{d}} f( P_{I_\tau^c} \vct{w}^\tau + P_{I_\tau} \vct{w} ) \:. \label{eq:bcd_direct_solve_update}
\eeq
In the case where $f$ is least squares, the update
\eqref{eq:bcd_direct_solve_update} is equivalent to block Gauss-Seidel
on the normal equations. For instance, for \eqref{eq:randomfeatures}, the update
\eqref{eq:bcd_direct_solve_update} reduces to solving the $b \times b$ equation
\beq
  \vct{w}^{\tau+1}_{I_\tau} \gets (\mtx{Z}_{I_\tau}^\T \mtx{Z}_{I_\tau} + n\lambda \mtx{I}_b)^{-1} \mtx{Z}_{I_\tau}^\T \mtx{Y} \:, \label{eq:bcd_direct_solve_lapack}
\eeq
where $Z_{I_\tau} := Z([n], I_\tau)$.
The $w_{I_\tau}$ notation means we set only the coordinates in $I_\tau$ equal
to the RHS, and the coordinates not in $I_\tau$ remain the same from the previous
iteration.

\paragraph{Distributed execution.}
We solve block coordinate descent in parallel by distributing the computation of
$\mtx{Z}_{I_\tau}^\T \mtx{Z}_{I_\tau}$ and $\mtx{Z}_{I_\tau}^\T \mtx{Y}$. To do this, we partition
the rows of $\mtx{Z}_{I_\tau}$, $\mtx{Y}$ across all the machines in a cluster and compute the sum of
outer products from each machine. The result of this distributed operation is a $b \times b$ matrix
and we pick $b$ such that the solve for $w^{\tau+1}_{I_\tau}$ can be computed quickly using existing
\texttt{lapack} solvers on a single machine.

Choosing an appropriate value of $b$ is important as it affects both the statistical accuracy
and run-time performance. Using a larger value for $b$ leads to improved convergence and is also
helpful for using BLAS-3 primitives in single machine operations. However, a very large value for $b$
increases the serial execution time and the communication costs. In practice, we see that setting
$b$ in the range 2,000 to 8,000 offers a good trade-off.

\paragraph{Block generation primitives.}
As mentioned previously, our algorithms only require a procedure that
materializes a column block at a time.  We denote this primitive by
$\textsc{KernelBlock}(X, I)$, where $X$ represents the data matrix and $I$ is a
list of column indices. The output of $\textsc{KernelBlock}$ is $K([n], I)$.
After a column block is used in a block coordinate descent update of the model,
it can be immediately discarded.  We also use distributed computation to
parallelize the generation of a block $\mtx{K}_{I_\tau}$ of the kernel matrix.
We also define a similar primitive,
$\textsc{RandomFeaturesBlock}(X, I)$, which
returns $Z([n], I)$ for random feature systems.

\subsection{Algorithm descriptions}

\paragraph{Full kernel block coordinate descent.}
Our full kernel solver is described in Algorithm~\ref{alg:kernel_bcd}.
Algorithm~\ref{alg:kernel_bcd} is actually Gauss-Seidel on the linear system
$(K+n\lambda I_n) \alpha = Y$, but as we will discuss in
Section~\ref{sec:rates:primaldual}, this is equivalent to block coordinate
descent on a modified objective function (which is strongly convex, even when
$K$ is rank deficient). See \cite{hefny15} for a similar discussion in the context
of ridge regression.

\paragraph{\nystrom{} block coordinate descent.}
Unlike the full kernel case, our \nystrom{} implementation operates directly on the normal
equations.  A notable point of our algorithm is that it does \emph{not} require
computation of the pseudo-inverse $K_{II}^\dag$. When the number of \nystrom{}
features is large, calculating the pseudo-inverse $K_{II}^\dag$ is
expensive in terms of computation and communication. By making $K_{II}$ a part
of the block coordinate descent update we are able to handle large number of
\nystrom{} features while only needing a block of features at a time.

\newcommand{\TabletextCommComp}{
	Computation and communication costs for one epoch of distributed
	block coordinate descent. The number of examples is $n$,
	the number of features is $p$, the block
	size is $b$, the number of classes is $k$, and the number of machines is $M$.
	Each cost is presented as (cost for one block)$\times$(number of blocks).
}

\begin{table}[!t]
\COND{}{
\caption{\TabletextCommComp{}}
\label{tab:comm_comp_costs}
}
\begin{center}
\begin{small}
\begin{sc}
\setlength{\tabcolsep}{0.3em}
\begin{tabular}{lcc}
\hline
\abovespace\belowspace
Algorithm & Computation & Communication \\
\hline
\abovespace
Full kernel & $(\frac{nbk}{M} + b^3) \times \frac{n}{b}$ & $b^2 \times \frac{n}{b}$ \\
\nystrom{}/RF. & $(\frac{nb^2}{M} + \frac{nbk}{M} + b^3) \times \frac{p}{b}$ & $\log(M)b^2 \times \frac{p}{b}$  \\[1ex]
\hline
\end{tabular}
\end{sc}
\COND{
\caption{\TabletextCommComp{}}
\label{tab:comm_comp_costs}
}{}
\end{small}
\end{center}
\end{table}

We denote $\textsc{Selector}(n, I)$ as the function which
returns an $\{0, 1\}^{n \times \abs{I}}$ matrix $S$ such that $S_{I(j)j} = 1$
and zero otherwise, for $j=1, ..., \abs{I}$; this is simply the column selector
matrix associated with the indices in $I$. Using the above notation, the
\nystrom{} algorithm is described in Algorithm~\ref{alg:nystrom_bcd}.

\begin{algorithm}[tb]
   \caption{Full kernel block coordinate descent}
   \label{alg:kernel_bcd}
\begin{algorithmic}
   \STATE {\bfseries Input:} data $X \in \X^{n}$, $Y \in \{\pm 1\}^{n \times k}$,
   \STATE ~~~~~~~~~~~ number of epochs $n_e$,
   \STATE ~~~~~~~~~~~ block size $b \in \{1, ..., n\}$,
   \STATE ~~~~~~~~~~~ regularizer $\lambda > 0$.
   \STATE {\bfseries Assume:} $n/b$ is an integer.
   \STATE $\pi \gets$ random permutation of $\{1, ..., n\}$.
   \STATE $\mathcal{I}_1, ..., \mathcal{I}_{\frac{n}{b}} \gets$ partition $\pi$ into $\frac{n}{b}$ pieces.
   \STATE $\alpha \gets 0_{n \times k}$.
   \FOR{$\ell=1$ {\bfseries to} $n_e$}
       \STATE $\pi \gets$ random permutation of $\{1, ..., \frac{n}{b} \}$.
       \FOR{$i=1$ {\bfseries to} $\frac{n}{b}$}
           \STATE $K_b \gets \textsc{KernelBlock}(X, \mathcal{I}_{\pi_i})$.
           \STATE $Y_b \gets Y(\mathcal{I}_{\pi_i}, [k])$.
           \STATE $R \gets 0_{b \times k}$.
           \FOR{$j \in \{1, ..., \frac{n}{b} \} \setminus \{\pi_i\}$}
               \STATE $R \gets R + K_b(\mathcal{I}_{\pi_i}, [b])^\T \alpha(\mathcal{I}_{\pi_i}, [b])$.
           \ENDFOR
           \STATE Solve $(K_b(\mathcal{I}_{\pi_i}, [b]) + \lambda I_b)\alpha_b = Y_b - R$.
           \STATE $\alpha(\mathcal{I}_{\pi_i}, [k]) \gets \alpha_b$.
       \ENDFOR
   \ENDFOR
\end{algorithmic}
\end{algorithm}

\begin{algorithm}[tb]
   \caption{\nystrom{} block coordinate descent}
   \label{alg:nystrom_bcd}
\begin{algorithmic}
   \STATE {\bfseries Input:} data $X \in \X^{n}$, $Y \in \{\pm 1\}^{n \times k}$,
   \STATE ~~~~~~~~~~~ number of epochs $n_e$,
   \STATE ~~~~~~~~~~~ number of \nystrom{} features $p \in \{1, ..., n\}$,
   \STATE ~~~~~~~~~~~ block size $b \in \{1, ..., p\}$.
   \STATE ~~~~~~~~~~~ regularizers $\lambda > 0, \gamma \geq 0$.
   \STATE {\bfseries Assume:} $p/b$ is an integer.
   \STATE $\mathcal{J} \gets$ $p$ without replacement draws from $\{1, ..., n\}$.
   \STATE $\mathcal{I}_1, ..., \mathcal{I}_{\frac{p}{b}} \gets$ partition $\mathcal{J}$ into $\frac{p}{b}$ pieces.
   \STATE $\alpha \gets 0_{p \times k}$, $R \gets 0_{n \times k}$.
   \FOR{$\ell=1$ {\bfseries to} $n_e$}
       \STATE $\pi \gets$ random permutation of $\{1, ..., \frac{n}{b} \}$.
       \FOR{$i=1$ {\bfseries to} $\frac{p}{b}$}
           \STATE $B \gets \{ (\pi_i - 1)b + 1, ..., \pi_i b \}$.
           \STATE $\alpha_b \gets \alpha(B, [k])$.
           \STATE $S_b \gets \textsc{Selector}(n, \mathcal{I}_{\pi_i})$.
           \STATE $K_b \gets \textsc{KernelBlock}(X, \mathcal{I}_{\pi_i})$.
           \STATE $R \gets R - (K_b + n\lambda S_b) \alpha_b$.
           \STATE $K_{bb} \gets K_b(\mathcal{I}_{\pi_i}, [b])$.
           \STATE Solve $(K_b^\T K_b + n\lambda K_{bb} + n\lambda\gamma I_b) \alpha_b' = K_b^\T(Y - R)$.
           \STATE $R \gets R + (K_b + n\lambda S_b) \alpha_b'$.
           \STATE $\alpha(B, [k]) \gets \alpha_b'$.
       \ENDFOR
   \ENDFOR
\end{algorithmic}
\end{algorithm}

\paragraph{Random features block coordinate descent.}
\COND{
Our random features solver is the same as Algorithm~2 from
\cite{huang2014kernel}. We include it in
Algorithm~\ref{alg:rf_bcd} for completeness.
\begin{algorithm}[htb]
   \caption{Random features block coordinate descent}
   \label{alg:rf_bcd}
\begin{algorithmic}
   \STATE {\bfseries Input:} data $X \in \X^{n}$, $Y \in \{\pm 1\}^{n \times k}$.
   \STATE ~~~~~~~~~~~ number of epochs $n_e$,
   \STATE ~~~~~~~~~~~ number of random features $p \geq 1$,
   \STATE ~~~~~~~~~~~ block size $b \in \{1, ..., p\}$.
   \STATE ~~~~~~~~~~~ regularizers $\lambda > 0$.
   \STATE {\bfseries Assume:} $p/b$ is an integer.
   \STATE $\pi \gets $ random permutation of $\{1, ..., p\}$.
   \STATE $\mathcal{I}_1, ..., \mathcal{I}_{\frac{p}{b}} \gets$ partition $\pi$ into $\frac{p}{b}$ pieces.
   \STATE $w \gets 0_{p \times k}$.
   \STATE $R \gets 0_{n \times k}$
   \FOR{$\ell=1$ {\bfseries to} $n_e$}
       \STATE $\pi \gets$ random permutation of $\{1, ..., \frac{p}{b} \}$.
       \FOR{$i=1$ {\bfseries to} $\frac{p}{b}$}
           \STATE $I \gets \mathcal{I}_{\pi_i}$.
           \STATE $Z_b \gets \textsc{RandomFeaturesBlock}(X, I)$.
           \STATE $R \gets R - Z_b w(I, [k])$.
           \STATE Solve $(Z_b^\T Z_b + n\lambda I_b) w_b = Z_b^\T(Y - R)$.
           \STATE $R \gets R + Z_b w_b$.
           \STATE $w(I, [k]) \gets w_b$.
       \ENDFOR
   \ENDFOR
\end{algorithmic}
\end{algorithm}

}{
Our random features solver is the same as Algorithm~2 from
\cite{huang2014kernel}; its description is in the appendix.
}

\paragraph{Computation and communication overheads.}
Table~\ref{tab:comm_comp_costs} summarizes the computation and communication
costs of the algorithms presented below. The computation costs in the full
kernel are associated with computing the residual $R$ and solving a $b\times b$
linear system. For the \nystrom{} method (and random features), the computation
costs include computing $K_{I}^\T K_{I}$, $K_{I}^\T Y$ in parallel and
a similar local solve. Computing the gram matrix however requires adding $M$
matrices of size $b\times b$. Using a tree-based aggregation, this results in
$O(\log(M) b^2)$ bytes being transferred. We study how these costs
matter in practice in Section~\ref{sec:experiments}.

\paragraph{Computing the regularization path.}
Algorithms \ref{alg:kernel_bcd}, \ref{alg:nystrom_bcd}, and \ref{alg:rf_bcd}
are all described for a single input $\lambda$. In practice, for model
selection, one often computes an estimator for multiple values of $\lambda$. The
na{\"i}ve way of doing this is to run the algorithm again for each value of
$\lambda$. However, a faster approach, which we use in our experiments, is to
maintain separate models $\alpha_\lambda$ and seperate residuals $R_\lambda$
for each value of $\lambda$, and reuse the computation of the block matrices
$K_b$ for full kernel, $K_b^\T K_b$ for \nystrom{}, and $Z_b^\T Z_b$ for random
features. We can do this because the block matrices do not depend on the value
of $\lambda$.
As we show in Section~\ref{sec:experiments}, in each iteration of our algorithms,
a large fraction of time is spent in computing these block matrices; thus 
this optimization allows us compute solutions for 
multiple $\lambda$ values for essentially the price of a single solution.

We would like to note that in the case of \nystrom{} approximations,
\cite{rudi15} provides an algorithm for computing the regularization path
along $p$, the number of \nystrom{} samples, by using rank-one Cholesky updates.
We leave it as future work to see if a similar technique can be applied to our
\nystrom{} block coordinate algorithm.

\section{Optimization and statistical rates}
In this section we present our theoretical results which characterize
optimization error for kernel methods.
All proofs are deferred to the appendix.

\paragraph{Known convergence rates.}
We start by stating the existing rates for block coordinate descent.
To do this, we define a restricted Lipschitz constant as follows.
For any $Q(\cdot)$ such that $Q(x) \succeq 0$ for all $x$, define
\beqs
  L_{\max,b}(Q(\cdot)) := \sup_{x \in \R^{d}} \max_{\abs{I}=b} \lambda_{\max} (P_I Q(x) P_I) \:.
\eeqs
Standard analysis of block coordinate descent
(see e.g. Theorem~1 of \cite{wright15}) states that to reach accuracy
$\E f(\vct{w}^{\tau}) - f_* \leq \epsilon$, one requires at most
\beq
\tau \leq O\left( \frac{d L_{\max,b}}{b m} \log{\epsilon^{-1}} \right) \label{eq:bcdrate_basic}
\eeq
iterations, where $L_{\max,b} := L_{\max,b}(\nabla^2 f(\cdot))$.

While $L_{\max,b} \leq \sup_{x \in \R^{d}} \lambda_{\max}( \nabla^2 f(x))$ always, it
is easy to construct cases where the inequality is tight\footnote{Take, for
instance, any block diagonal matrix where the blocks are of size $b$.}.
In this case, the upper bound $\eqref{eq:bcdrate_basic}$ dictates that $d/b$
more iterations of block coordinate descent are needed to reach the same error
tolerance as the incremental gradient method.

\newcommand{\TabletextRates}{
Iteration complexity and block size requirements of
solving a full kernel system with block coordinate descent
versus \nystrom{} and random feature approximations.
For both \nystrom{}/RF, we assume that $p \gtrsim \log{n}$, and
for \nystrom{} we assume the regularized objective with $\gamma > 0$
(see Section~\ref{sec:background:kernelmethods}). Finally,
for both \nystrom{}/RF, the bounds hold w.h.p. over the feature sampling.
}

\begin{table*}[t]
\COND{}{
\caption{\TabletextRates{}}
\label{tab:rates}
}
\begin{center}
\begin{small}
\begin{sc}
\begin{tabular}{lcccc}
\hline
\abovespace\belowspace
 & \multicolumn{2}{c}{$\sigma_\ell(K)$ Exponential decay} & \multicolumn{2}{c}{$\sigma_\ell(K)$ Polynomial decay} \\
\hline
\abovespace\belowspace
Method& Iterations & Block Size & Iterations & Block Size  \\
\hline
\abovespace
Full    & $\widetilde{O}(n)$ & $\Omega(\log^2{n})$ & $\widetilde{O}(n^{\frac{2\beta}{2\beta+1}})$ & $\Omega(n^{1/(2\beta+1)} \log{n})$ \\
\nystrom{}  & $\widetilde{O}(np/\gamma)$ & $\Omega((1+\gamma)\log{n})$ & $\widetilde{O}(pn^{\frac{2\beta}{2\beta+1}}/\gamma)$ & $\Omega((1+\gamma)\log{n})$ \\
\belowspace
R.F. & $\widetilde{O}(n)$ & $\Omega(\log{n})$ & $\widetilde{O}(n^{\frac{2\beta}{2\beta+1}})$ & $\Omega(\log{n})$ \\
\hline
\end{tabular}
\end{sc}
\COND{
\caption{\TabletextRates{}}
\label{tab:rates}
}{}
\end{small}
\end{center}
\vspace{-0.2in}
\end{table*}

\subsection{Improved rate for quadratic functions}

In our experience, the case where block coordinate descent needs $d/b$ times
more iterations does not occur in practice.
To address this, we improve the analysis in the case of strongly convex and
smooth quadratic functions to depend only on a quantity which behaves like the
\emph{expected} value $\E_I \lambda_{\max}( P_I \nabla^2 f P_I )$ where $I$ is
drawn uniformly.

\begin{theorem}
\label{thm:bcdrate}
Let $f : \R^d \rightarrow \R$ be a quadratic function with
Hessian  $\nabla^2 f \in \R^{d \times d}$, and assume for some $L \geq m > 0$,
\beqs
 m \leq \lambda_{\min}(\nabla^2 f), \qquad \lambda_{\max}(\nabla^2 f) \leq L \:.
\eeqs
Let $\vct{w}^\tau$ denote the $\tau$-th
iterate of block coordinate descent with the index set $I_\tau$ consisting of $b
\in \{1,...,d\}$ indices drawn uniformly at random without replacement from
$\{1, ..., d\}$. The iterate $\vct{w}^\tau$ satisfies
\beqs
  \E f(w^\tau) - f_* \leq \left(1 - \frac{m}{2L_{\mathrm{eff}}} \right)^\tau (f(w^0) - f_*) \:,
\eeqs
where
\beqs
  L_{\mathrm{eff}} :=  e^2 L + \frac{d\log(2d^2/b)}{b} \norm{\diag(\nabla^2 f)}_{\infty} \:.
\eeqs
\end{theorem}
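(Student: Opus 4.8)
The plan is to combine the classical exact‑block‑minimization decrease identity for quadratics with a sharp high‑probability bound on the largest eigenvalue of a uniformly random principal submatrix of the Hessian; the latter is where $\norm{\diag(\nabla^2 f)}_\infty$ and the logarithmic factor enter.

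First I would reduce to the standard form $f(\vct{w}) = f_* + \tfrac12 (\vct{w}-\vct{w}_*)^\T H (\vct{w}-\vct{w}_*)$ with $H := \nabla^2 f$ (here $\vct{w}_*$ is the unique minimizer, which exists since $m>0$). Fix an index set $I$ with $\abs{I}=b$ and write $\vct{g} := \nabla f(\vct{w}^\tau) = H(\vct{w}^\tau-\vct{w}_*)$ and $\vct{g}_I := P_I\vct{g}$ for its restriction to $I$. The update \eqref{eq:bcd_direct_solve_update} freezes coordinates outside $I$ and sets $\vct{w}^{\tau+1}_I = \vct{w}^\tau_I - H_{II}^{-1}\vct{g}_I$ (well defined since $H_{II}\succeq mI_b$), and a one‑line computation gives the exact decrease
$$f(\vct{w}^\tau)-f(\vct{w}^{\tau+1}) = \tfrac12\,\vct{g}_I^\T H_{II}^{-1}\vct{g}_I \ \ge\ \frac{\norm{\vct{g}_I}^2}{2\,\lambda_{\max}(H_{II})}\ \ge\ 0\:.$$

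The crux is the concentration lemma: for $I$ a uniformly random $b$‑subset of $[d]$,
$$\Pr\!\left[\lambda_{\max}(H_{II}) \ge \tfrac{b}{d}L_{\mathrm{eff}}\right]\ \le\ \tfrac{b}{2d}\:.$$
To prove it, note $\lambda_{\max}(H_{II}) = \lambda_{\max}(P_I H P_I) = \lambda_{\max}(H^{1/2}P_I H^{1/2})$ (all three equal $\norm{H^{1/2}P_I}^2$), and $H^{1/2}P_I H^{1/2} = \sum_{i\in I}\vct{v}_i\vct{v}_i^\T$ where $\vct{v}_i := H^{1/2}e_i$, so $\sum_{i=1}^d \vct{v}_i\vct{v}_i^\T = H$ and $\norm{\vct{v}_i}^2 = H_{ii}\le \norm{\diag(H)}_\infty =: \Delta$. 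This is a sum of PSD rank‑one terms drawn without replacement, so the matrix Chernoff bound applies with per‑term bound $R=\Delta$ and $\mu_{\max} := \lambda_{\max}(\E[\sum_{i\in I}\vct{v}_i\vct{v}_i^\T]) = \tfrac{b}{d}L$ — the without‑replacement version follows because the matrix Laplace transform is dominated by the i.i.d.\ one (Gross–Nesme). Concretely, for $t\ge\mu_{\max}$,
$$\Pr[\lambda_{\max}(H_{II})\ge t]\ \le\ d\left(\frac{e^{\,t/\mu_{\max}-1}}{(t/\mu_{\max})^{t/\mu_{\max}}}\right)^{\!\mu_{\max}/\Delta}\:.$$
Choosing $t = \tfrac{b}{d}L_{\mathrm{eff}} = e^2\mu_{\max} + \log(2d^2/b)\Delta$, the ratio $t/\mu_{\max}\ge e^2$ bounds the parenthesized factor by $e^{-t/\mu_{\max}}$, so the right side is $\le d\,e^{-t/\Delta}\le d\,e^{-\log(2d^2/b)} = \tfrac{b}{2d}$, using $t\ge\log(2d^2/b)\Delta$.

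Finally I would assemble the pieces. Let $\mathcal{G} := \{\lambda_{\max}(H_{II}) < \tfrac{b}{d}L_{\mathrm{eff}}\}$. On $\mathcal{G}$ the decrease is $\ge \tfrac{d}{2bL_{\mathrm{eff}}}\norm{\vct{g}_I}^2$, and it is always nonnegative, so with $\E_I\norm{\vct{g}_I}^2 = \tfrac{b}{d}\norm{\vct{g}}^2$, $\norm{\vct{g}_I}\le\norm{\vct{g}}$, and $\Pr[\mathcal{G}^c]\le\tfrac{b}{2d}$,
$$\E_I\big[f(\vct{w}^\tau)-f(\vct{w}^{\tau+1})\big]\ \ge\ \frac{d}{2bL_{\mathrm{eff}}}\,\E_I\big[\norm{\vct{g}_I}^2\ind_{\mathcal{G}}\big]\ \ge\ \frac{d}{2bL_{\mathrm{eff}}}\left(\tfrac{b}{d}-\tfrac{b}{2d}\right)\norm{\vct{g}}^2\ =\ \frac{\norm{\vct{g}}^2}{4L_{\mathrm{eff}}}\:.$$
Since $\norm{\vct{g}}^2 = (\vct{w}^\tau-\vct{w}_*)^\T H^2(\vct{w}^\tau-\vct{w}_*)\ge m(\vct{w}^\tau-\vct{w}_*)^\T H(\vct{w}^\tau-\vct{w}_*) = 2m(f(\vct{w}^\tau)-f_*)$, this yields $\E_I[f(\vct{w}^{\tau+1})]-f_* \le (1-\tfrac{m}{2L_{\mathrm{eff}}})(f(\vct{w}^\tau)-f_*)$; taking total expectations and iterating over $\tau$ finishes the proof. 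The main obstacle is the concentration lemma — obtaining the matrix Chernoff bound with exactly the right trade‑off between a $\tfrac{b}{d}L$ term and a $\log d$–scaled $\norm{\diag(H)}_\infty$ term, and justifying it for sampling without replacement; the decrease identity and the conditional‑expectation bookkeeping are routine.
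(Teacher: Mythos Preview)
Your proof is correct and follows essentially the same strategy as the paper's: a one-step decrease bound in terms of $\lambda_{\max}(H_{II})$, a matrix Chernoff inequality (for sampling without replacement) to show $\lambda_{\max}(H_{II}) < \tfrac{b}{d}L_{\mathrm{eff}}$ on all but a $\tfrac{b}{2d}$ fraction of index sets, and a good/bad set split in the expectation. Your bookkeeping is slightly more streamlined---you use only nonnegativity of the decrease on $\mathcal{G}^c$ together with the crude bound $\norm{\vct{g}_I}\le\norm{\vct{g}}$, whereas the paper's Lemma~\ref{lemma:rateset} also tracks the $L_{\max,b}$ contribution on $\mathcal{G}^c$ via a $\lambda_{\min}(\E Q_I)$ computation---but this only discards a nonnegative term and lands on the same final constant.
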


Theorem~\ref{thm:bcdrate} states that in order to reach an
$\epsilon$-sub-optimal solution for $f$, the number of iterations
required is at most
\begin{align}
	O\left( \left(\frac{L}{m} + \frac{1}{b} \frac{d\log{d}}{m} \norm{\diag(\nabla^2 f)}_\infty \right)\log{\epsilon^{-1}} \right) \:. \label{eq:bcdrate:iters}
\end{align}
That is, block coordinate descent pays the rate of gradient descent plus $1/b$
times the rate of standard ($b=1$) coordinate descent (ignoring log factors).
To see that this can be much better than the standard rate \eqref{eq:bcdrate_basic},
suppose that $d = p^2$ for some $p \geq 1$, and
consider any quadratic with Hessian
\begin{align*}
    \nabla^2 f = \lambda I_d + \diag( \ind_{\sqrt{d}} \ind_{\sqrt{d}}^\T, ..., \ind_{\sqrt{d}} \ind_{\sqrt{d}}^\T ) \in \R^{d \times d}\:,
\end{align*}
where $\ind_{\ell} \in \R^{\ell}$ is the all ones vector.
If we set $b = \sqrt{d}$,
the rate from \eqref{eq:bcdrate_basic} requires $\widetilde{O}(d/\lambda)$
iterations to reach tolerance $\epsilon$, whereas the rate from
Theorem~\ref{thm:bcdrate} requires only $\widetilde{O}(\sqrt{d}/\lambda)$ to
reach the same tolerance.

Equation~\eqref{eq:bcdrate:iters} suggests setting $b$ such that the second
term matches $L/m$ order wise. That is, as long as $b \gtrsim
d\log{d}\norm{\diag(\nabla^2 f)}_\infty / L$, we have that at most
$\widetilde{O}(L/m)$ iterations are necessary\footnote{
We use the notation $x \gtrsim y$ to mean there exists an absolute constant $C > 0$ such that $x \geq C y$,
and $\widetilde{O}(\cdot)$ to suppress dependence
on poly-logarithmic terms.
}.
In the sequel, we will assume this setting of $b$.

We highlight the main ideas behind the proof of Theorem~\ref{thm:bcdrate}.  The
proof proceeds in two steps. First, we establish a structural result which
states that, given a large set $\mathcal{G}$ of indices where the restricted
Lipschitz constant of the Hessian is well controlled, the overall dependence on
Lipschitz constant is not much worse than the maximum Lipschitz constant
restricted to $\mathcal{G}$.
Second, we use a probabilistic argument to show that such a set $\mathcal{G}$
does indeed exist.  The first result is based on a modification of the standard
coordinate descent proof, whereas the second result is based on a matrix
Chernoff argument.

\subsection{Rates for kernel optimization}
\label{sec:rates:kernelopt}

We now specialize Theorem~\ref{thm:bcdrate} to the optimization problems
described in Section~\ref{sec:background:kernelmethods}.  We assume the
asymptotic setting \cite{braun06} where $\sigma_\ell(K) = n \cdot \mu_\ell$ for
(a) exponential decay $\mu_\ell = e^{-\rho \ell}$ with $\rho > 0$ and (b)
polynomial decay $\mu_\ell = \ell^{-2\beta}$ with $\beta > 1/2$.  We also set
$\lambda$ to be the minimax optimal rate \cite{dicker15} for the settings of
(a) and (b): for exponential decay $\lambda = \log{n}/n$ and for polynomial
decay $\lambda = n^{-\frac{2\beta}{2\beta + 1}}$.  Finally, we assume that
$\sup_{x_1, x_2 \in \X} \kernel(x_1, x_2) \leq O(1)$.

Table~\ref{tab:rates} quantifies the iteration complexity of solving the full
kernel system versus the \nystrom{} and random features approximation.  Our
worst case analysis shows that the \nystrom{} system requires roughly
$p$ times more iterations to solve than random features. This difference is due
to the inability to reduce the \nystrom{} normal equation from quadratic in $K$
to linear in $K$, as is done in the full kernel normal equation.  Indeed, the \nystrom{}
method is less well conditioned in practice, and we observe similar phenomena in our
experiments below. The derivation of the bounds in Table~\ref{tab:rates} is deferred to
Appendix~\ref{sec:appendix:tabrates}.

\subsection{Primal versus dual coordinate methods}
\label{sec:rates:primaldual}
Duality gives us a choice as to whether to solve the primal or dual
problem; strong duality asserts that both solutions are equivalent. We can use
this freedom to our advantage, picking the formulation which yields the most
numerically stable system.  For instance, in the full kernel solver 
we chose to work with the system $(K+n\lambda I_n)\alpha = Y$ instead of $K(K+n\lambda I_n)\alpha = KY$.
The former is actually the dual system, and the latter is the primal.  Here, the primal
system has a condition number which is roughly the square of the dual.

On the other hand, for both \nystrom{} and random features, our system works on the
primal formulation.  This is intuitively desirable since $p \ll n$
and hence the primal system is much smaller.  However, some authors
including \cite{shalevshwartz13} advocate for the dual
formulation even when $p \ll n$. We claim that, at least in the case
of random Fourier features, their argument does not apply.

To do this, we consider the random features program with $b = k = 1$, which
fits the framework of \cite{shalevshwartz13} the closest.  By the primal-dual
correspondence $w = \frac{1}{n\lambda} Z^\T \alpha$, the dual program is
\beqs
	\max_{\alpha \in \R^n} \frac{1}{n} Y^\T \alpha - \frac{1}{n} \norm{\alpha}^2 - \frac{1}{\lambda n^2} \alpha^\T ZZ^\T \alpha \:.
\eeqs
Theorem~5 from \cite{shalevshwartz13} states that $O((n +
L_{\max,1}(ZZ^\T)/\lambda)\log{\epsilon^{-1}})$ iterations of dual coordinate
ascent are sufficient to reach an $\epsilon$-sub-optimal primal solution. On
the other hand, Equation~\eqref{eq:bcdrate_basic} yields that at most $O((p
L_{\max,1}(Z^\T Z)/n\lambda) \log{\epsilon^{-1}})$ iterations of primal
coordinate descent are sufficient to reach the same accuracy.

For random Fourier features, both $L_{\max,1}(ZZ^\T)$ and $L_{\max,1}(Z^\T Z)$ can be
easily upper bounded,
since $\abs{\ip{z(x_i)}{z(x_i)}} \leq \frac{2}{p} \sum_{k=1}^{p} \abs{\cos(x_i^\T w_k
+ b_k)} \leq 2$ and also
$\norm{\diag(Z^\T Z)}_\infty = \max_{1 \leq k \leq p} \frac{2}{p} \sum_{i=1}^{n} \cos^2(w_k^\T x_i + b_k) \leq \frac{2n}{p}$.
Therefore, the dual rate is $\widetilde{O}(n + 1/\lambda)$ and the primal rate
is $\widetilde{O}(1/\lambda)$. That is, for random Fourier features,
the primal rate upper bound beats the dual rate upper bound.


\newcommand{\TabletextDatasets}{
Datasets used for evaluation. Here $n$, $d$, $k$ refer to the number of training examples, features and
classes respectively. Size represents the size of the full kernel matrix in terabytes.
}

\begin{table}[t]
\COND{}{
\caption{\TabletextDatasets{}}
\label{tab:datasets}
}
\begin{center}
\begin{small}
\begin{sc}
\begin{tabular}{lcccc}
\hline
\abovespace\belowspace
Dataset & $n$ & $d$ & $k$ & Size (TB)\\ 
\hline
\abovespace
TIMIT    & $2,251,569$ & $440$ & $147$ & $40.56$ \\ 
Yelp     & $1,255,412$ & $65,282,968$ & $5$ &$12.61$ \\ 
\belowspace
CIFAR-10 & $500,000$ & $4096$ & $10$ & $2.00$ \\ 
\hline
\end{tabular}
\end{sc}
\end{small}
\COND{
\caption{\TabletextDatasets{}}
\label{tab:datasets}
}{}
\end{center}
\COND{}{\vspace{-0.2in}}
\end{table}

\section{Experiments}
\label{sec:experiments}

\begin{figure*}[t!]
  \centering
  \subfigure[]{\label{fig:timit-compare}\includegraphics[width=0.32\textwidth]{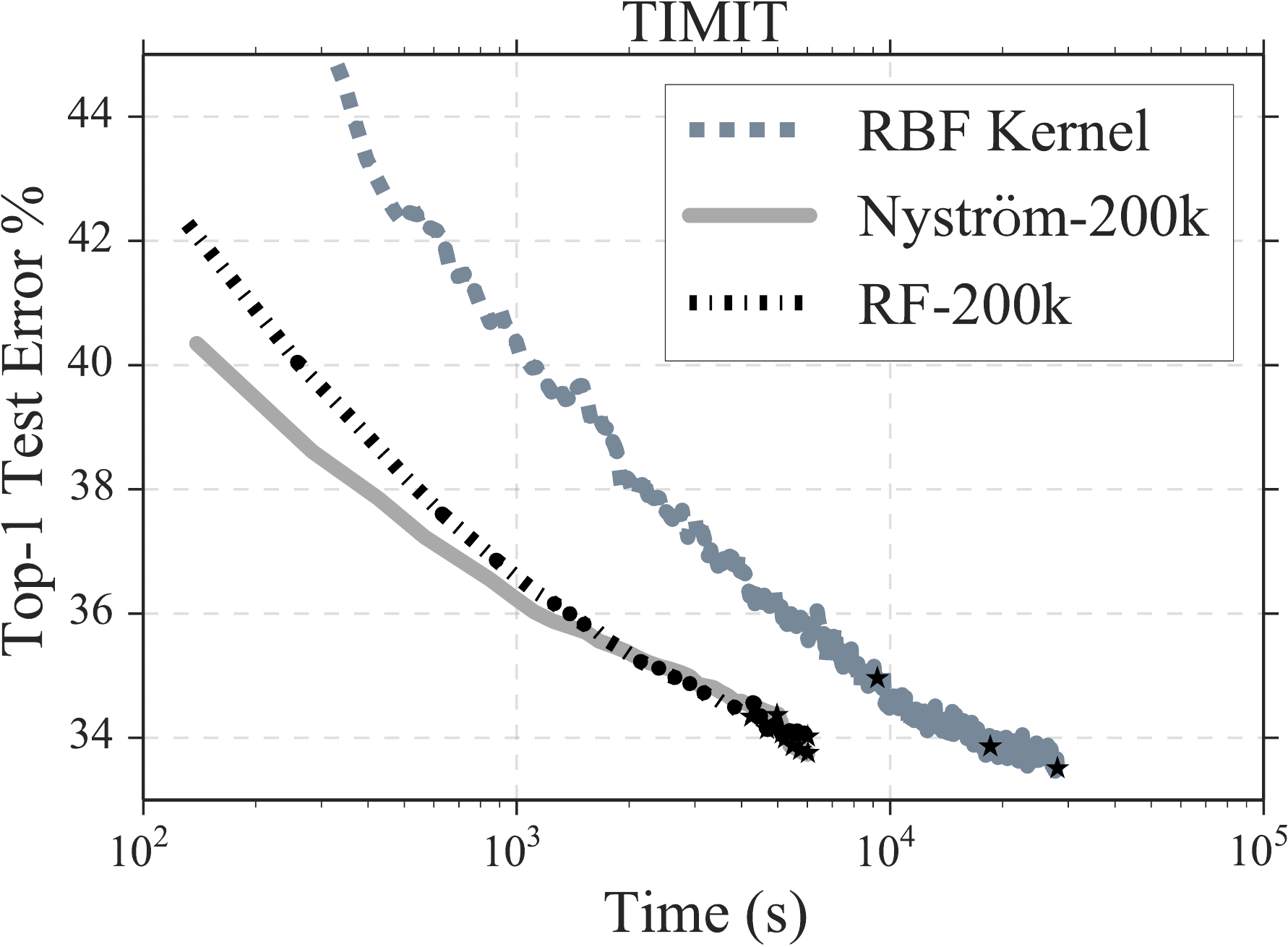}}
  \subfigure[]{\label{fig:yelp-compare}\includegraphics[width=0.32\textwidth]{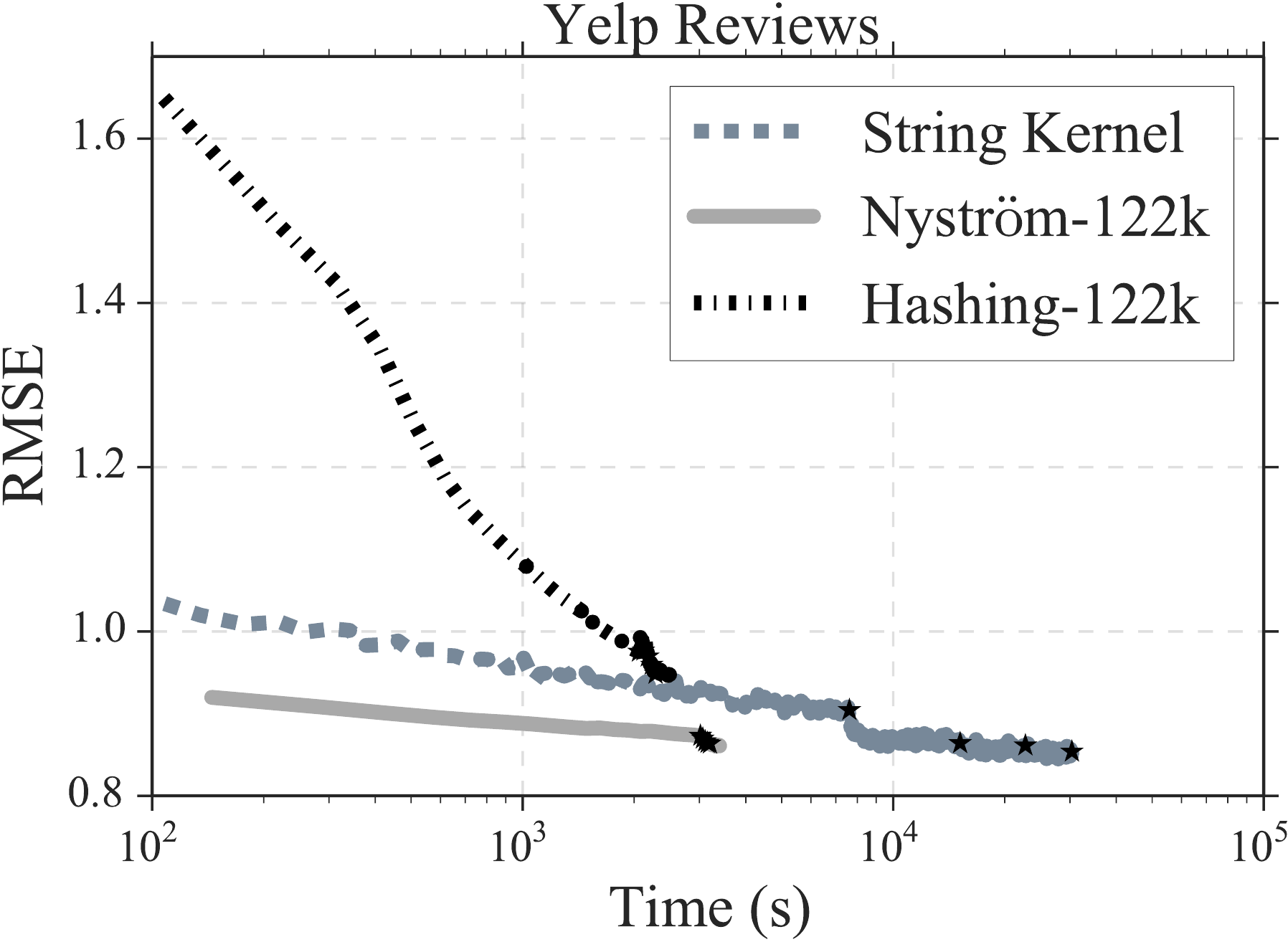}}
  \subfigure[]{\label{fig:cifar-compare}\includegraphics[width=0.32\textwidth]{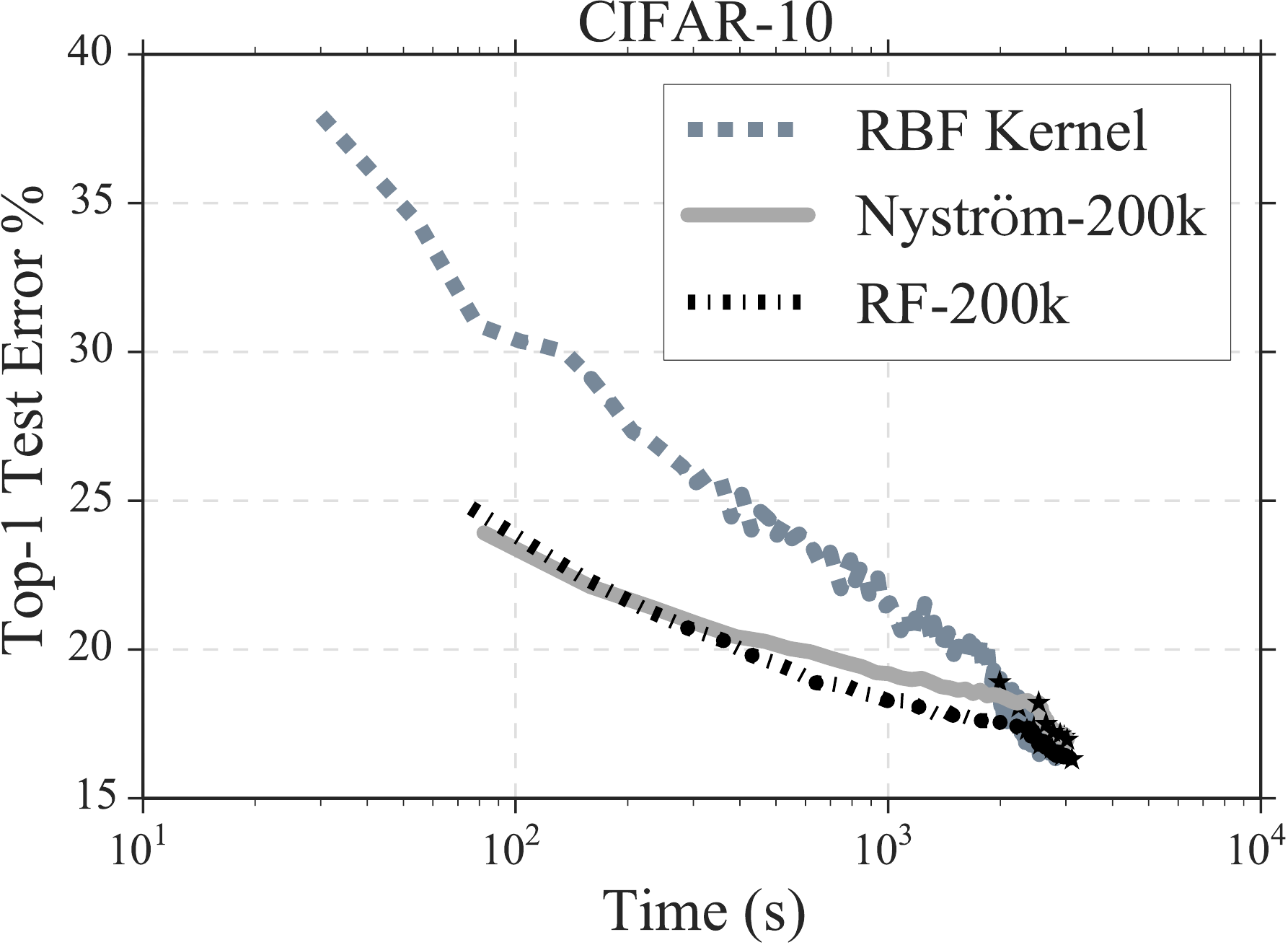}}
  \vspace{-0.2in}
  \caption{Comparison of classification error using different methods on the TIMIT, Yelp, and CIFAR-10
datasets. We measure the test error after every block of the algorithm; black stars
denote the end of an epoch.}
  \label{fig:time-compare}
  \COND{}{\vspace{-0.1in}}
\end{figure*}

\begin{figure*}[t!]
  \centering
  \subfigure[]{\label{fig:timit-feats}\includegraphics[width=0.32\textwidth]{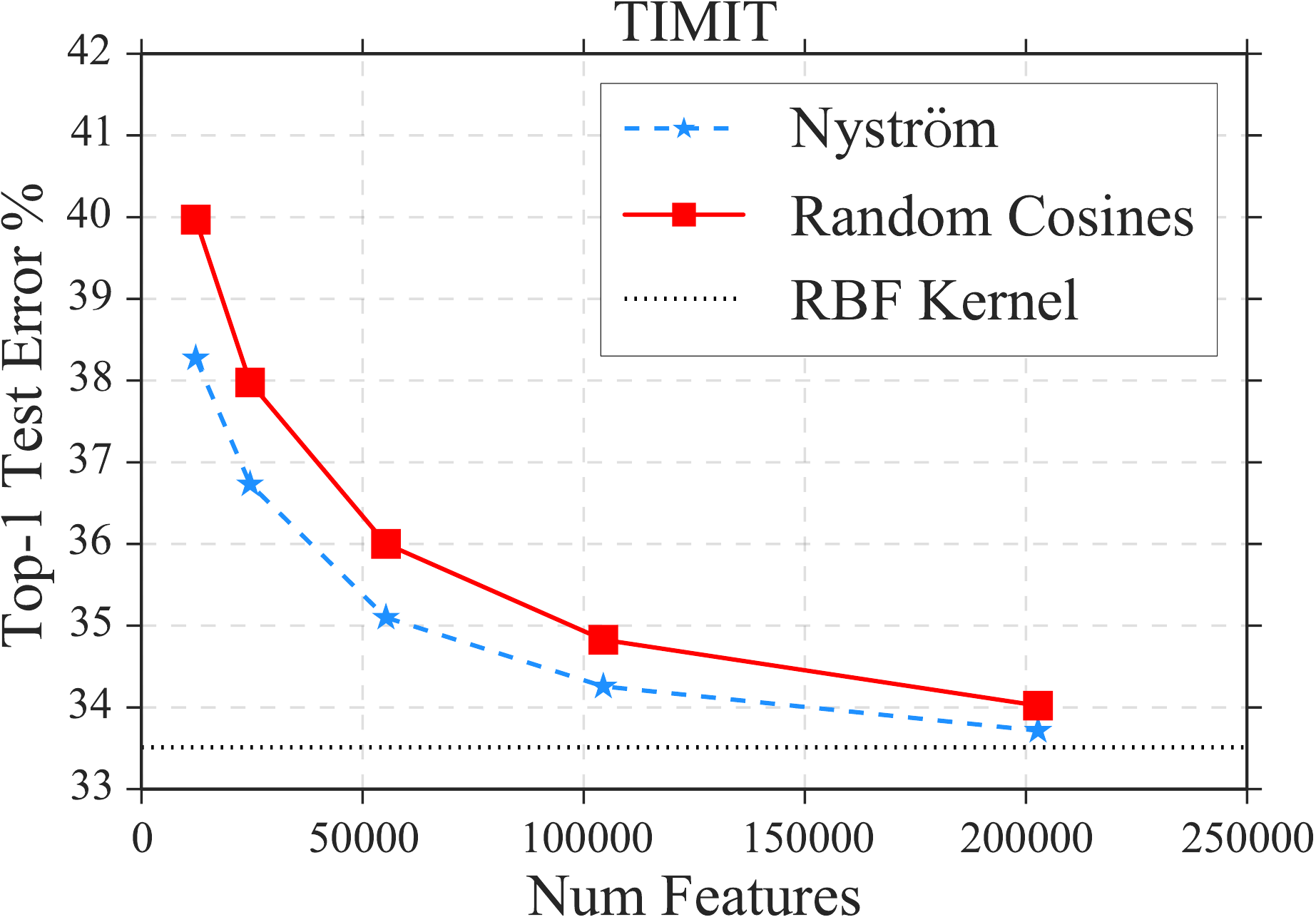}}
  \subfigure[]{\label{fig:yelp-feats}\includegraphics[width=0.32\textwidth]{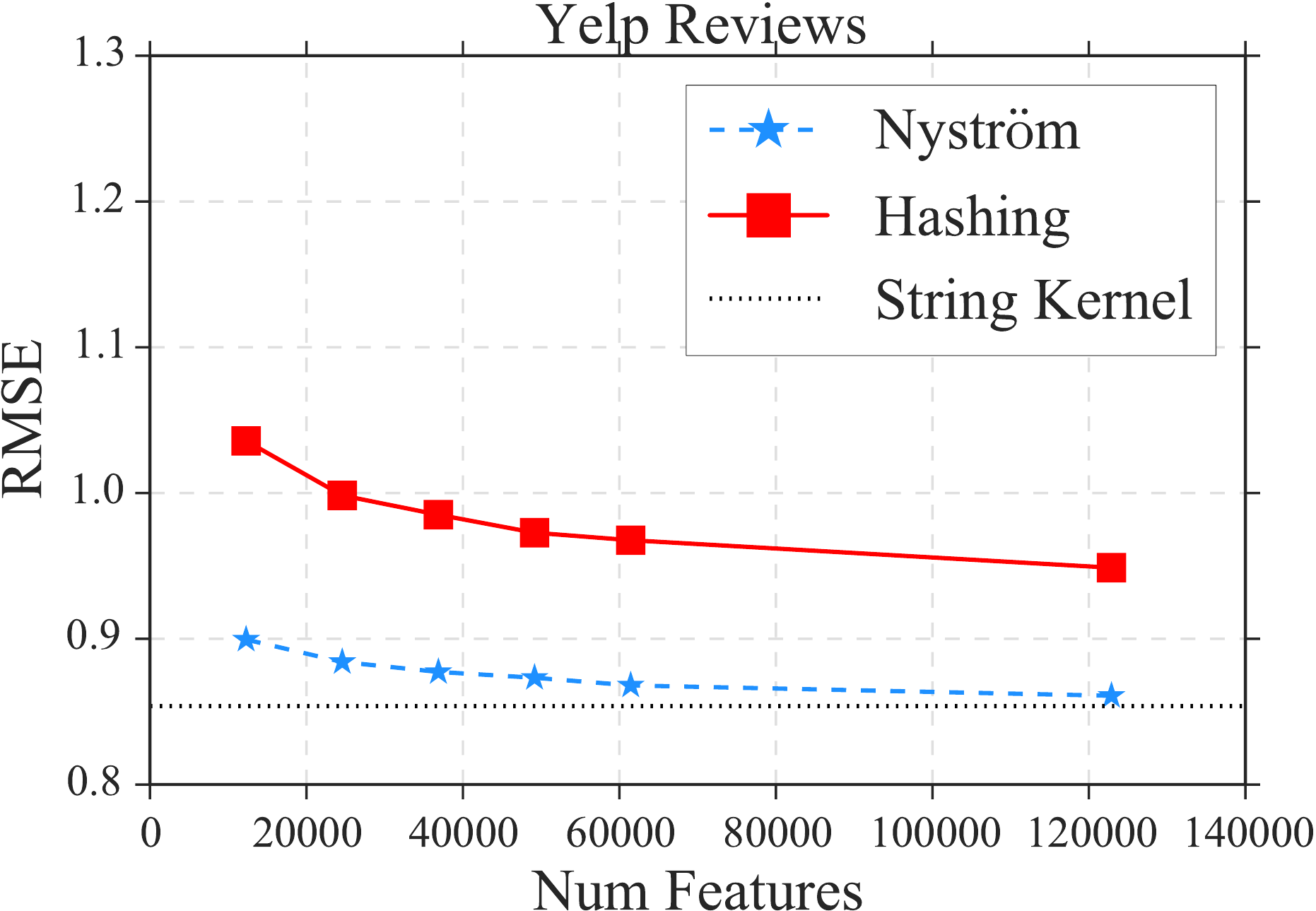}}
  \subfigure[]{\label{fig:cifar-feats}\includegraphics[width=0.32\textwidth]{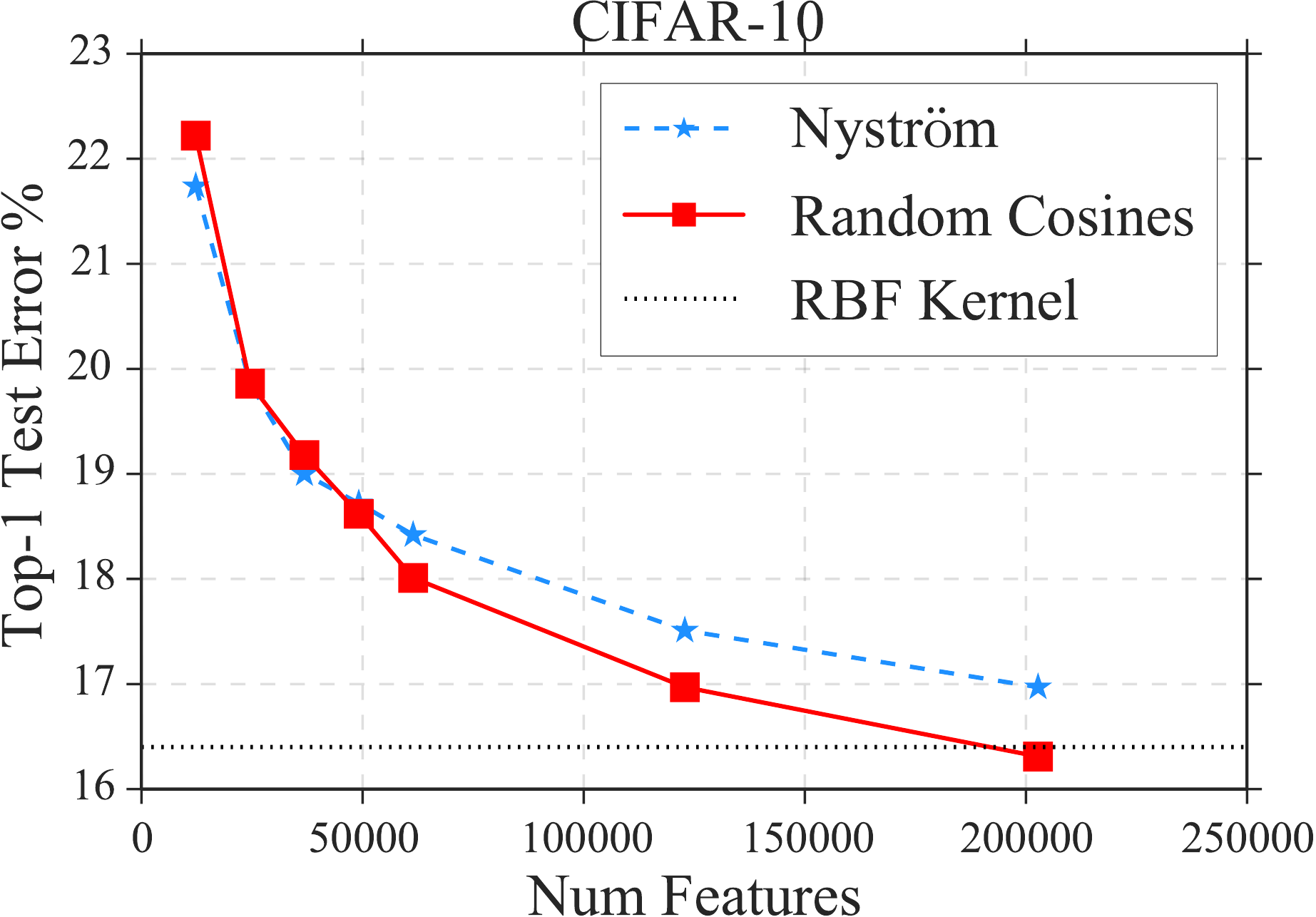}}
  \COND{}{\vspace{-0.2in}}
  \caption{Classification error as we increase the number of features for \nystrom{}, Random Features on the TIMIT, Yelp, and CIFAR-10.}
  \label{fig:feats-compare}
  \COND{}{\vspace{-0.2in}}
\end{figure*}

This section describes our experimental evaluation.
We implement our algorithms in Scala on top of Apache Spark~\cite{zaharia12}.
Our experiments are run on Amazon EC2, with a cluster of 128
\texttt{r3.2xlarge} machines, each of which has 4 physical cores and 62 GB of RAM.

We measure classification accuracy for three large datasets
spanning speech (TIMIT), text (Yelp), and vision (CIFAR-10). The size of
these datasets are summarized in Table~\ref{tab:datasets}.
For all our experiments, we set the block size to $b=6144$. We shuffle the raw
data at the beginning of the algorithm, and select blocks in a random order for
block coordinate descent. For the \nystrom{} method, we
uniformly sample $p$ columns without replacement from the full kernel matrix.

\subsection{TIMIT}
We evaluate a phone classification task on the TIMIT dataset\footnote{\url{https://catalog.ldc.upenn.edu/LDC93S1}}, which
consists of spoken audio from 462 speakers.
We use the same preprocessing pipeline as \cite{huang2014kernel},
resulting in $2.25 \times 10^6$ training examples and $10^5$ test examples. The preprocessing
pipeline produces a dense vector with $440$ features and we use a shuffled version of this as the input to our
kernel methods. We apply a Gaussian (RBF) kernel for the \nystrom{} and exact methods and use
random cosines~\cite{rahimi07} for the random feature method. Figure~\ref{fig:timit-compare} shows the
top-1 test error for each technique. From the figure, we can see that while the exact method takes the longest
to complete a full epoch (around $2.5$ hours), it achieves the lowest top-1 test-error ($33.51\%$) among all
methods after $3$ epochs. Furthermore, unlike the exact method, the data for the \nystrom{} and random features
with $p=200,000$ can be cached in memory; as a result, the approximate methods run much faster after
the first epoch compared to the exact method.

We also compare \nystrom{} and random features by varying
$p$ in Figure~\ref{fig:timit-feats} and find that for $p \geq 100,000$ both methods approach the
test error of the full kernel within $1\%$.

\subsection{Yelp Reviews}
We next evaluate a text classification task where the goal is to predict a rating from one to five stars from the
text of a review.  The data comes from
Yelp's academic dataset\footnote{\url{https://www.yelp.com/academic_dataset}}, which
consists of $1.5 \times 10^6$ customer reviews. We set aside 20\% of the reviews
for test, and train on the remaining 80\%. For preprocessing, we use
\texttt{nltk}\footnote{\url{http://www.nltk.org/}} for tokenizing and stemming
documents. We then remove English stop words and create $3$-grams, resulting in
a sparse vector with dimension $6.52 \times 10^7$.
For the exact and \nystrom{} experiments, we apply a linear kernel,
which when combined with the $3$-grams can be viewed as an instance of a
string kernel \cite{sonnenburg07}.
For random features, we apply a hash kernel \cite{weinberger09} using
\texttt{MurmurHash3} as our hash function. Since we are predicting ratings for a review, we
measure accuracy by using the root mean square error (RMSE) of the predicted rating
as compared to the actual rating. Figure~\ref{fig:yelp-compare} shows how various kernel methods perform
with respect to wall clock time. From the figure,  we can see that the string
kernel performs much better than the hash-based random features for this
classification task. We also see that the \nystrom{} method achieves almost the
same RMSE ($0.861$) as the full kernel ($0.854$) when using $122,000$ features.
Finally, Figure~\ref{fig:yelp-feats} shows that the improved accuracy from using
the string kernel over hashing holds as we vary the number of features ($p$) for the \nystrom{} and random feature methods.

\begin{figure*}[t!]
  \centering
  \hspace{0.2in}
  \includegraphics[width=0.6\textwidth]{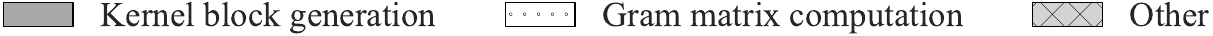}
  \vspace{-0.2in}
\end{figure*}
\begin{figure*}[t!]
  \centering
  \subfigure[]{\label{fig:timit-breakdown}\includegraphics[width=0.32\textwidth]{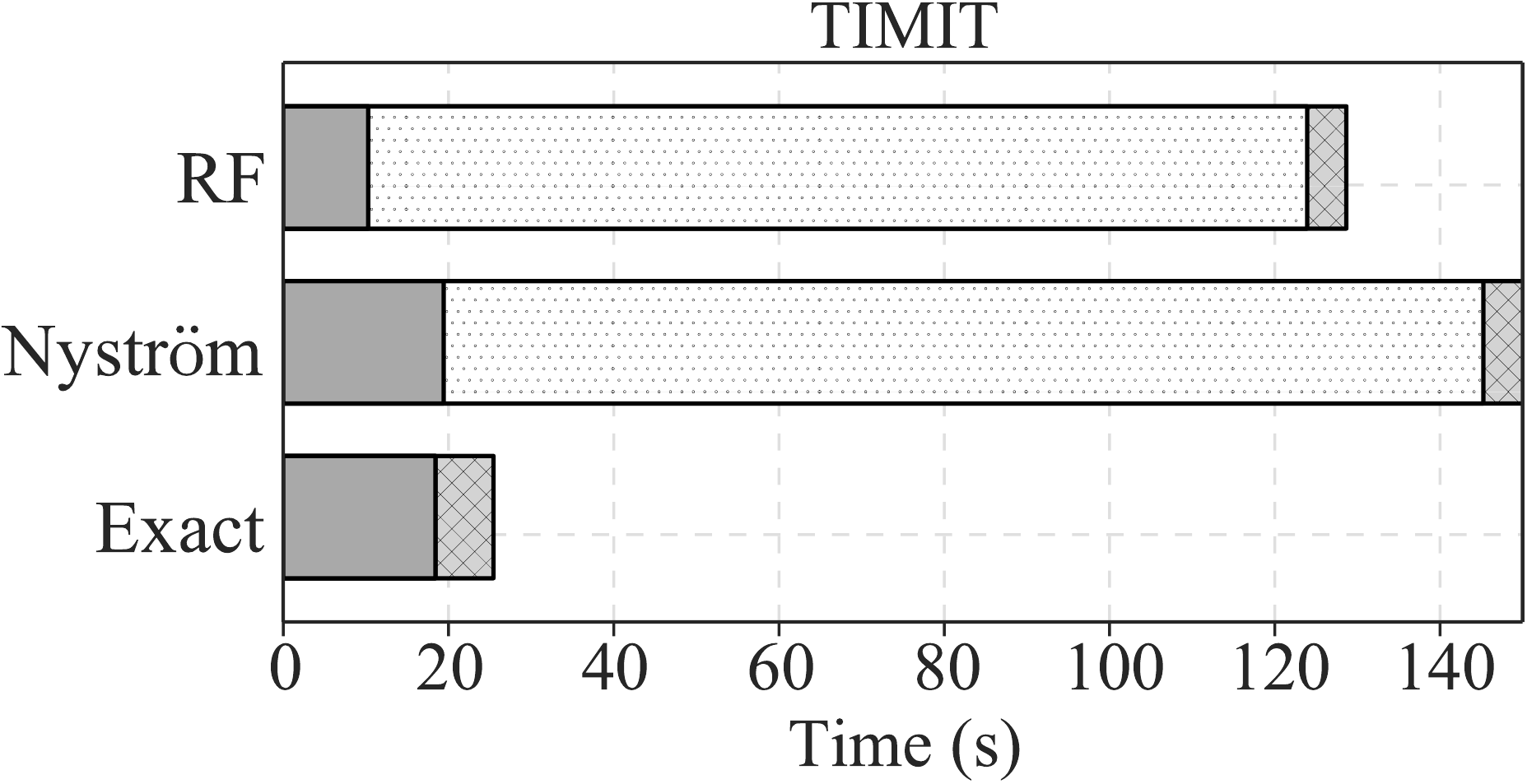}}
  \subfigure[]{\label{fig:yelp-breakdown}\includegraphics[width=0.32\textwidth]{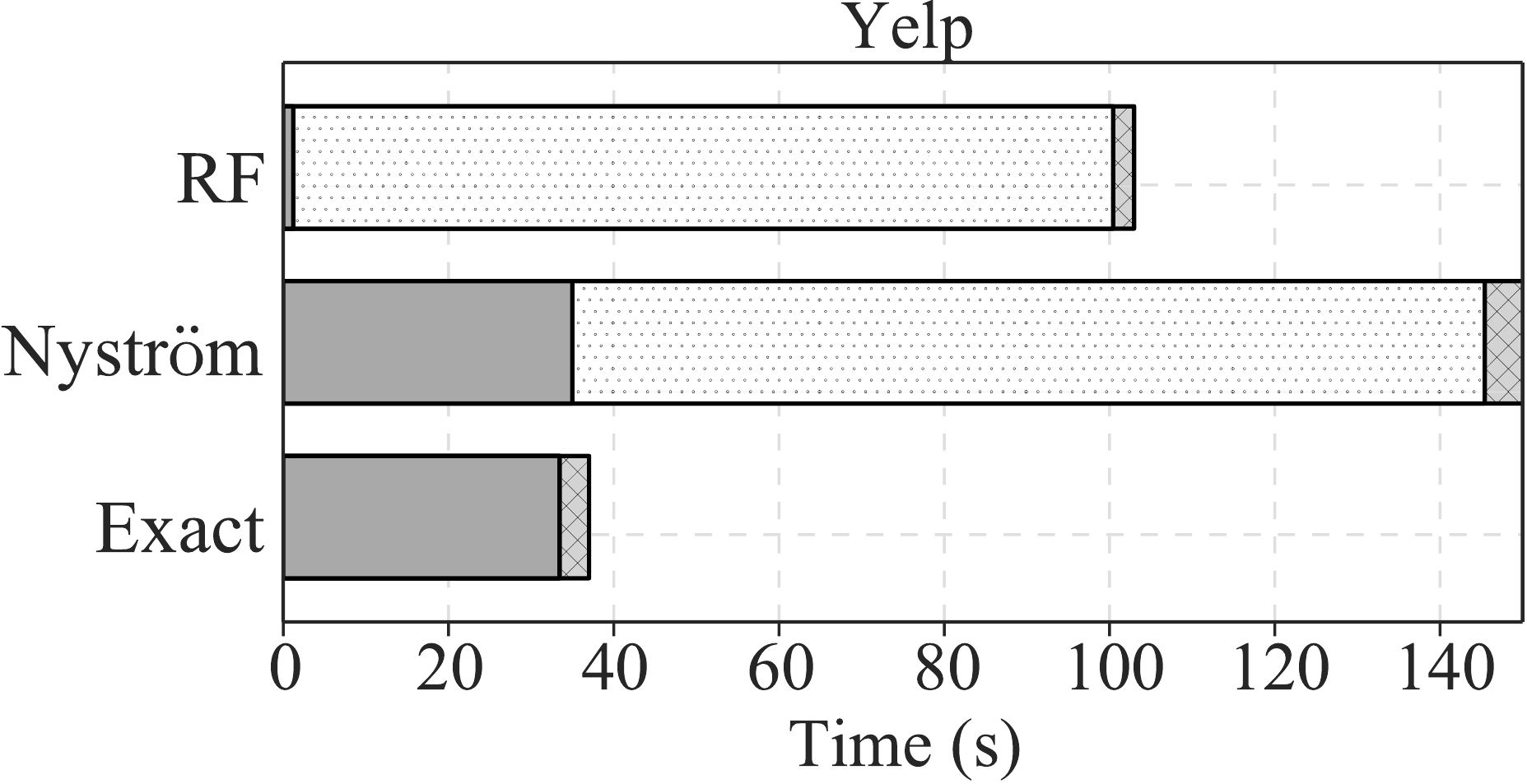}}
  \subfigure[]{\label{fig:cifar-breakdown}\includegraphics[width=0.32\textwidth]{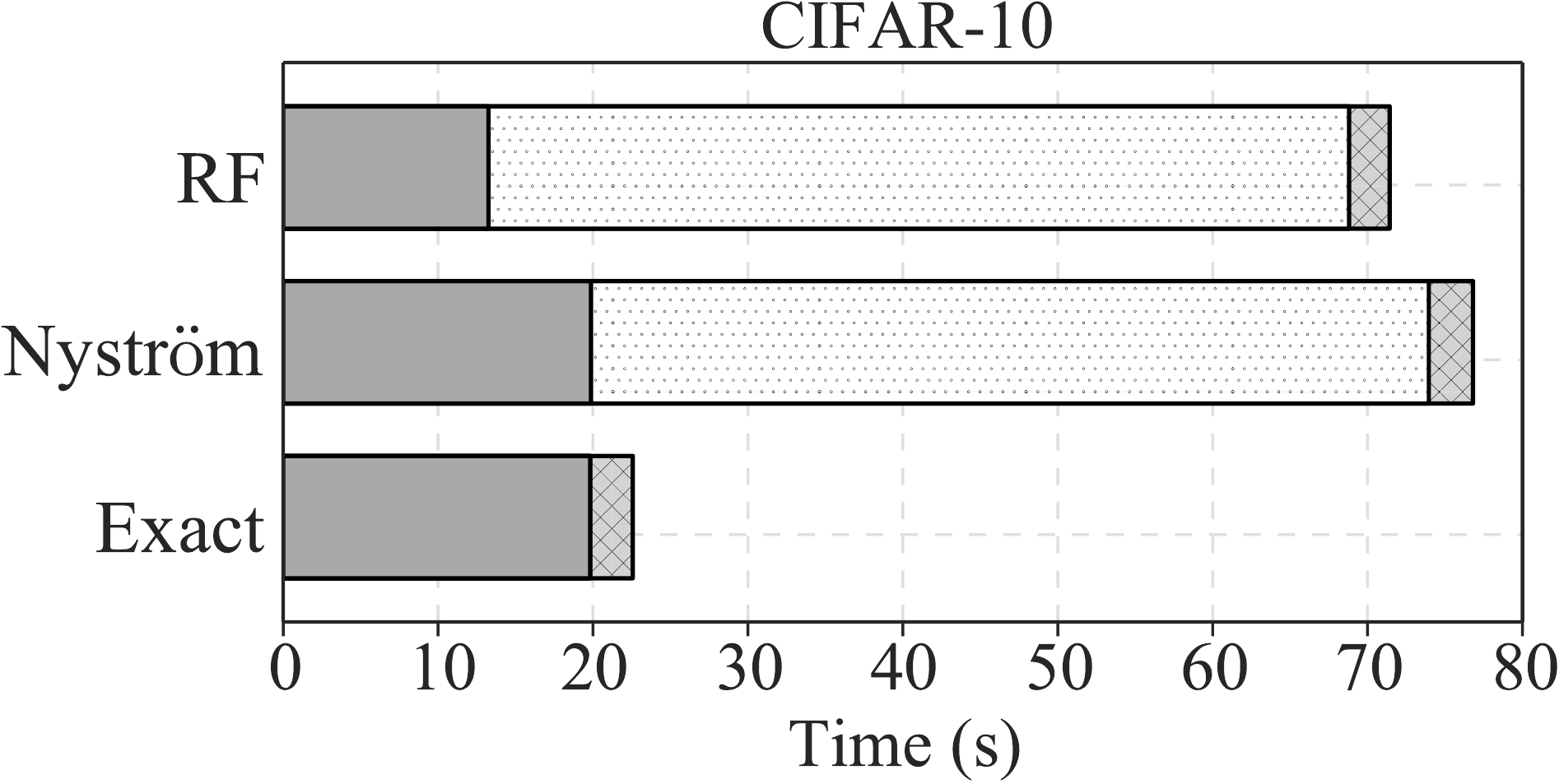}}
  \COND{}{\vspace{-0.2in}}
  \caption{Breakdown of time to compute a \emph{single} block in the first epoch on the TIMIT, CIFAR-10,
and Yelp datasets.}
  \label{fig:break-it-down}
  \COND{}{\vspace{-0.1in}}
\end{figure*}

\subsection{CIFAR-10}
Our last task involves image classification for the CIFAR-10 dataset \footnote{\url{cs.toronto.edu/~kriz/cifar.html}}.
We perform the same data augmentation as described in \texttt{cuda-convnet2}
\footnote{\url{github.com/akrizhevsky/cuda-convnet2}}, which
results in 500,000 train images.
For preprocessing, we use a pipeline similar to \cite{coates12}, replacing the
$k$-means step with random image patches. Using 512 random image patches,
we get $4096$ features per image and fitting a linear model with these features
gives us $25.7\%$ test error.  For our kernel methods, we start with these $4096$
features as the input and we use the RBF kernel for the exact and \nystrom{} method and
random cosines for the random features method.

From Figure~\ref{fig:cifar-compare}, we see that on CIFAR-10 the full kernel takes around the same
time as \nystrom{} and random features. This is because we have fewer examples
($n=500,000$) and this leads to fewer blocks that need to be solved per-epoch.  We are also able
to cache the entire kernel matrix in memory ($\sim$ 2TB) in this case and this provides a speedup
after the first epoch.

Furthermore, as shown in Figure~\ref{fig:cifar-feats}, we see that applying a non-linear kernel to the
output of convolutions using random patches can result in significant improvement in accuracy. With the non-linear kernel, we
achieve a test error of $16.4\%$, which is $9.3\%$ lower than a linear model trained with
the same features.

\COND{
  \begin{figure*}[t!]
    \centering
    \subfigure[]{\label{fig:cifar-convergence}\includegraphics[width=0.47\textwidth]{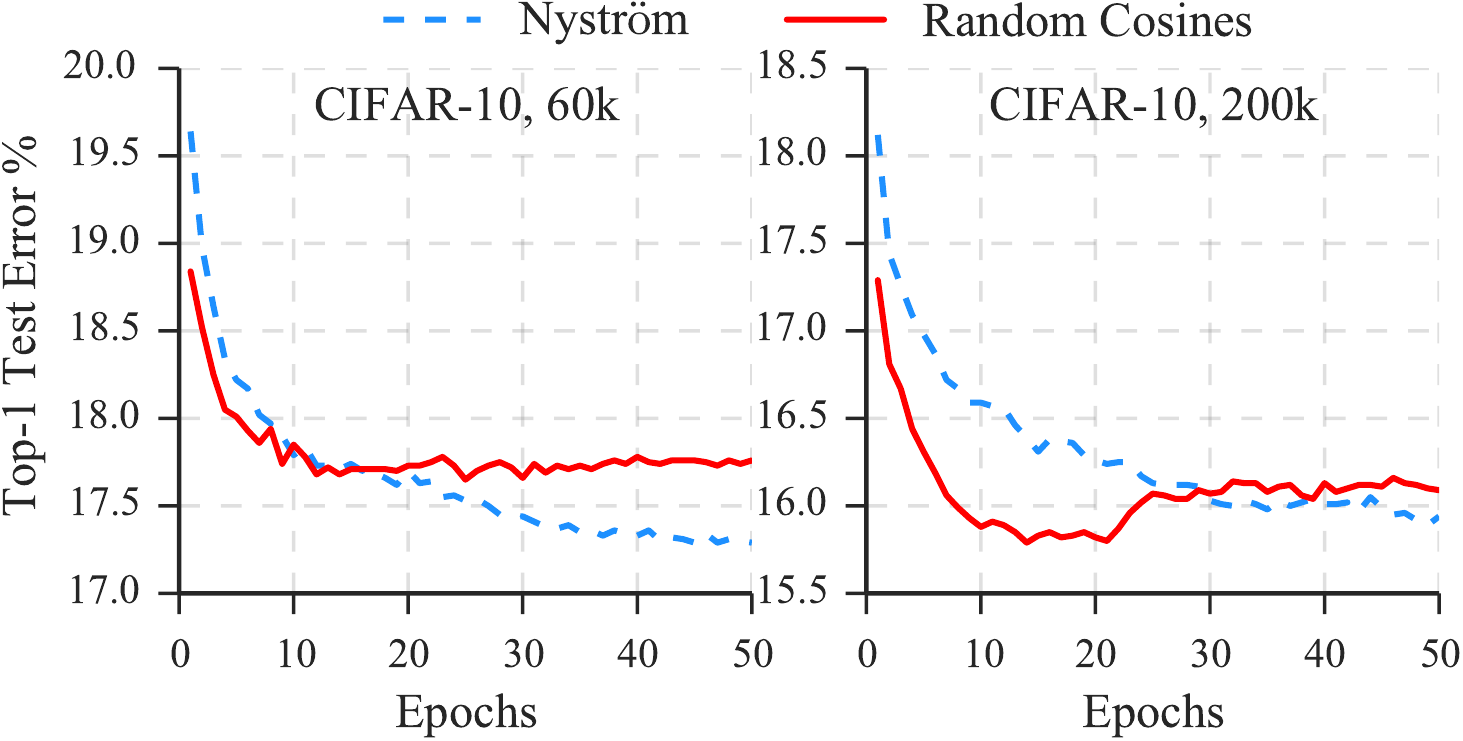}}
    \subfigure[]{\label{fig:cifar-feats-50epochs}\includegraphics[width=0.48\textwidth]{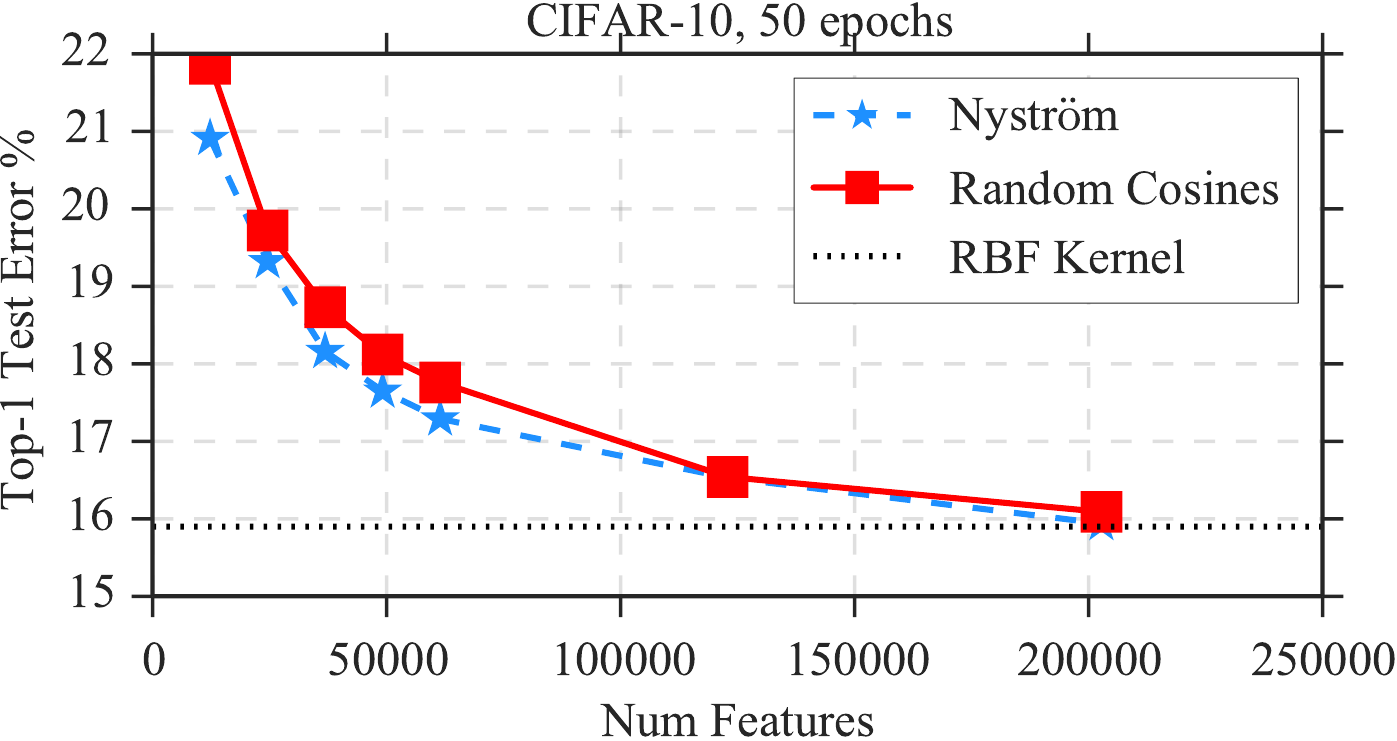}}
    \label{fig:cifar-convergence-50}
    \caption{Convergence rate, Top-1 test error for CIFAR-10 across 50 epochs for \nystrom{} and random features.}
  \end{figure*}
}{
\begin{figure}[t!]
  \centering
  \includegraphics[width=0.95\columnwidth]{plots/cifar/cifar_convergence.pdf}
  \vspace{-0.1in}
  \caption{Comparing convergence of \nystrom{} and random features for CIFAR-10.}
  \label{fig:cifar-convergence}
  \vspace{-0.1in}
\end{figure}
}
When comparing random features and \nystrom{} after $5$ epochs for various
values of $p$, we see that they perform similarly for smaller number of
features but that random features performs better with larger number of
features. We believe that this is due to the \nystrom{} normal equations having
a larger condition number for the CIFAR-10 augmented dataset which leads to a
worse convergence rate.  We verify this in Figures~\ref{fig:cifar-convergence}
and \ref{fig:cifar-feats-50epochs} by running by \nystrom{} and random feature
solvers for 50 epochs. In Figure~\ref{fig:cifar-convergence}, we fix the number
of random features to $p \in \{60,000, 200,000\}$, and we see that \nystrom{}
takes more epochs to converge but reaches a better test error.  In
Figure~\ref{fig:cifar-feats-50epochs}, we perform the same sweep as in
Figure~\ref{fig:cifar-feats} except we stop at 50 epochs instead of 5. Indeed,
when we do this, the difference between \nystrom{} and random features matches
the trends in Figures~\ref{fig:timit-feats} and \ref{fig:yelp-feats}.

\subsection{Performance}
\label{sec:eval:perf}
We next study the runtime performance characteristics of each method. Figure~\ref{fig:break-it-down}
shows a timing breakdown for running \emph{one block} of block coordinate descent on the
three datasets.  From the figure, we see that the choice of the kernel approximation can significantly impact
performance since different kernels take different amounts of time to generate. For example, the hash random feature used for the Yelp dataset is much cheaper to compute than
the string kernel. However, computing a block of the RBF kernel is similar in cost to computing a
block of random cosine features. This results in similar performance characteristics for the \nystrom{} and
random feature methods on TIMIT. 

We also observe that the full kernel takes the least amount of time to solve one block.
This is primarily because the full kernel does not compute a gram
matrix $Z_b^{\T}Z_b$ and only extracts a block of the kernel matrix $K_{bb}$. Thus, when the number
of blocks is small, as is the case for CIFAR-10 in Figure~\ref{fig:cifar-compare}, the full
kernel's performance becomes comparable to the \nystrom{} method. 

\COND{}{
\begin{figure}[t!]
  \centering
  \includegraphics[width=0.90\columnwidth]{plots/cifar/cifar_top1Err_vs_feats_50epochs.pdf}
  \vspace{-0.1in}
  \caption{CIFAR-10 Top-1 test error versus number of features after 50 epochs.}
  \label{fig:cifar-feats-50epochs}
  \vspace{-0.1in}
\end{figure}
}

\COND{}{
Figure~\ref{fig:break-it-down} also demonstrates that
computing the gram matrix and generating the kernel block are the two most expensive steps in our
algorithm. Computing the gram matrix uses distributed matrix multiplication, which is
well studied~\cite{van1997summa}.
In practice, we also observe that
generating the RBF kernel scales well; our scaling results are included in Appendix~\ref{sec:appendix:scaling}
for completeness. Thus, we believe that our algorithms will scale well as datasets and clusters grow in size.
}

\COND{
  \subsection{Scalability of RBF kernel generation}
  \label{sec:appendix:scaling}

  \begin{figure*}[t!]
    \centering
    \subfigure[]{\label{fig:timit-rbf-scale}\includegraphics[width=0.45\textwidth]{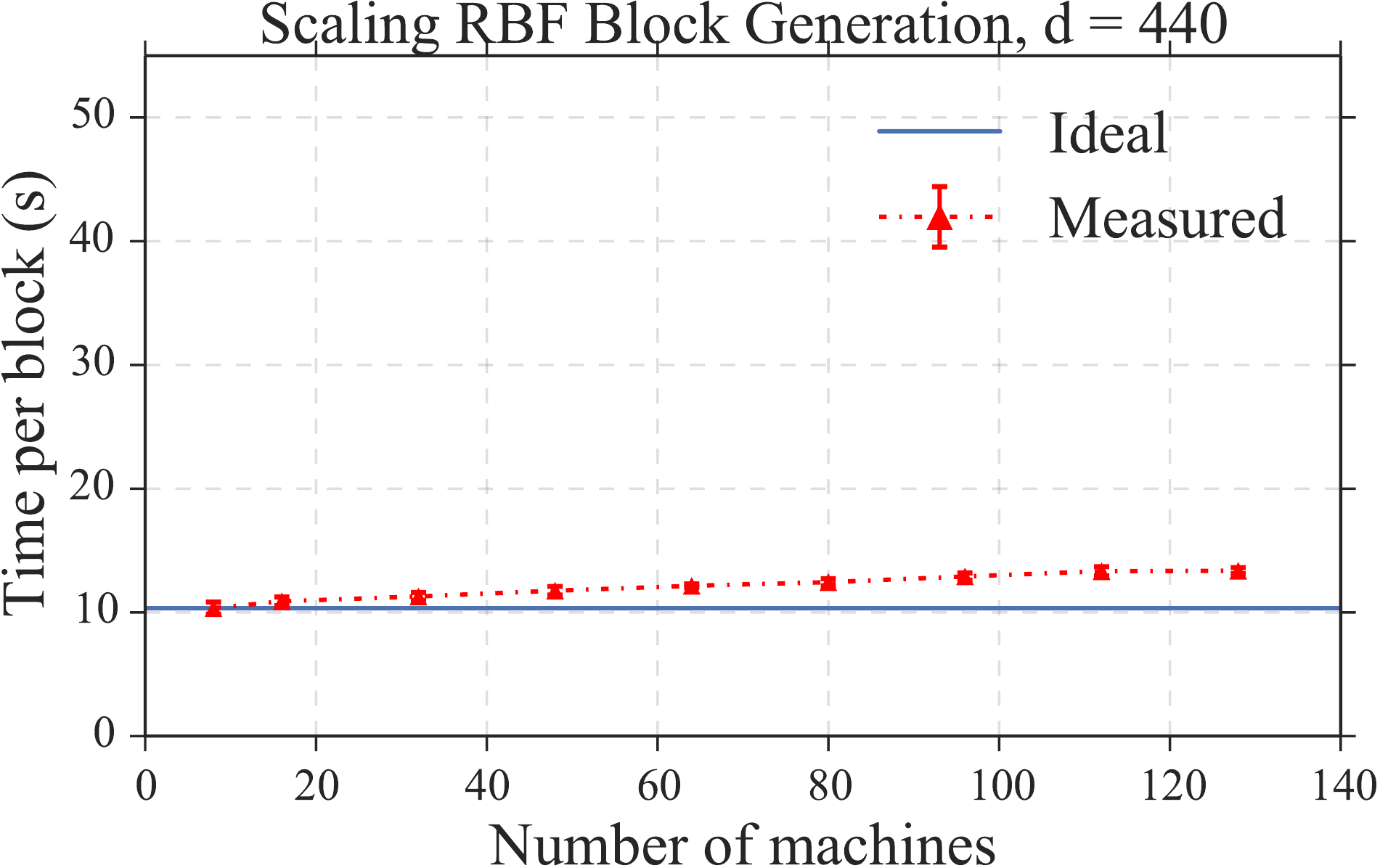}}
    \subfigure[]{\label{fig:cifar-rbf-scale}\includegraphics[width=0.45\textwidth]{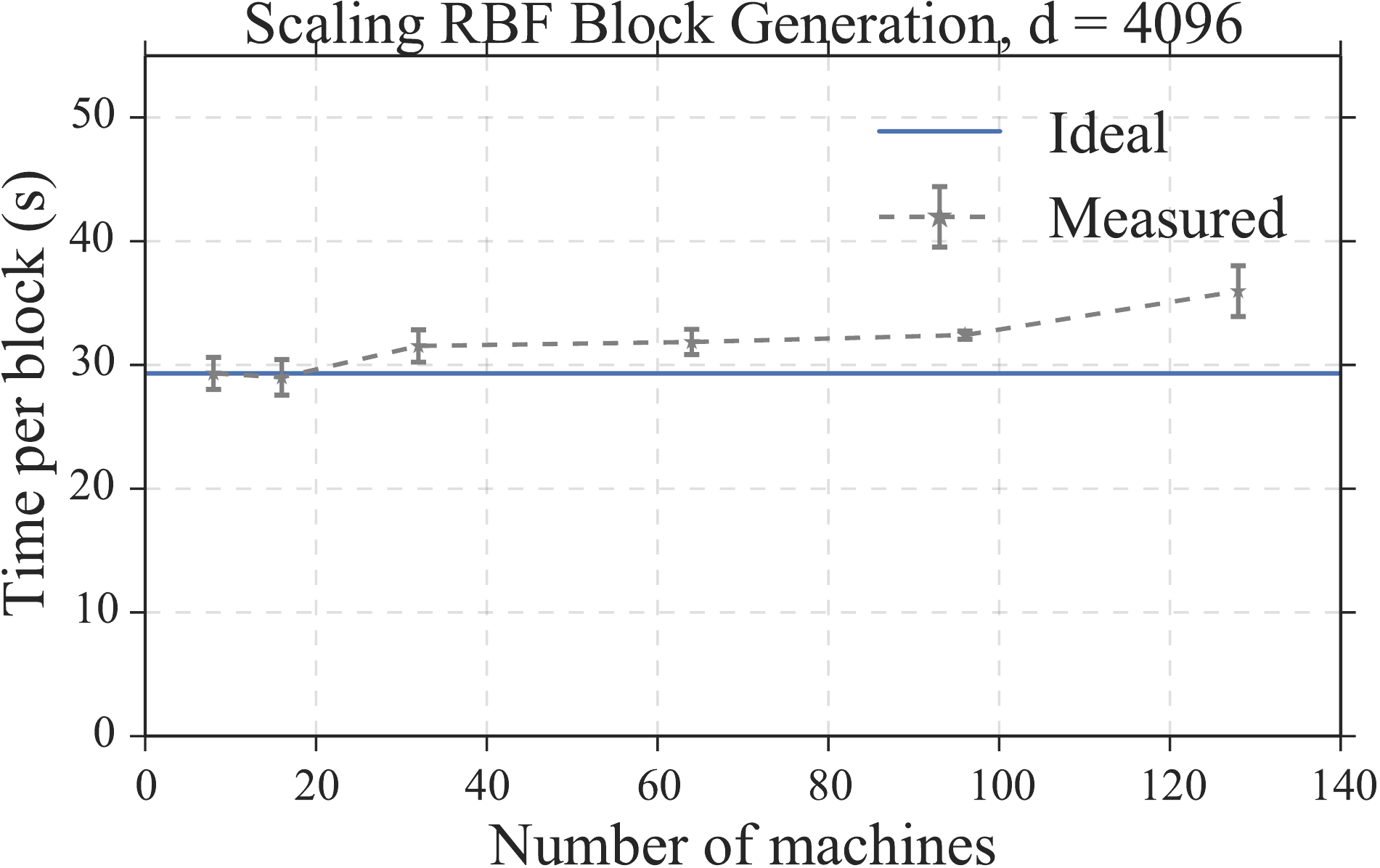}}
    \caption{Time taken to compute one block of the RBF kernel as we scale the number of examples and
  the number of machines used.}
    \label{fig:scaling-rbf}
  \end{figure*}

  Figure~\ref{fig:break-it-down} also demonstrates that
  computing the gram matrix and generating the kernel block are the two most expensive steps in our
  algorithm. Computing the gram matrix uses distributed matrix multiplication, which is
  well studied~\cite{van1997summa}.
  To see how the cost of kernel generation changes as dataset size grows, we perform a weak scaling experiment
  where we increase the number of examples and the number of machines used while keeping the number of
  examples per machine constant ($n=16,384$). We run this experiment for $d=440$ and $d=4096$, which are the
  number of features in TIMIT and CIFAR-10 respectively. Figure~\ref{fig:scaling-rbf} contains results
  from this experiment. In the weak scaling scenario, ideal scaling implies that the time to
  generate a block of the kernel matrix remains constant as we increase both the data and the number of
  machines. However, computing a block of the RBF kernel
  involves broadcasting a $b \times d$ matrix to all the machines in the cluster.
  This causes a slight decrease in performance as we go from $8$ to $128$ machines. As broadcast
  routines scale as $O(\log{M})$, we believe that our kernel block generation methods will continue to
  scale well for larger datasets.
}{}

\section{Conclusion}
This paper shows that scalable kernel machines are feasible with
distributed computation.
There are several theoretical and experimental continuations of this work.

On the theoretical side, a limitation of our current analysis of block coordinate 
descent is that we cannot
hope to achieve rates better than
gradient descent. We believe it is possible to leverage the direct solve in
\eqref{eq:bcd_direct_solve_update} to improve our rate,  since when $b = d$ the
algorithm reduces to Newton's method.
We are also interested in seeing if acceleration techniques can be
applied to 
substantially reduce the number of
iterations needed. 

On the experimental side, we would like to extend our algorithm to handle other
losses than the square loss; ADMM might be one approach for this.
More broadly, since solving a least squares program is a core
primitive for many optimization algorithms, we are interested to see if our
techniques can be applied in other domains.

\section*{Acknowledgements}
The authors thank Vikas Sindhwani and the IBM corporation for providing access
to the derived TIMIT dataset used in our experiments.  BR is generously
supported by ONR awards  N00014-14-1-0024, N00014-15-1-2620, and
N00014-13-1-0129, and NSF awards CCF-1148243 and CCF-1217058. RR is supported by
the U.S. Department of Energy under award numbers DE-SC0008700 and AC02-05CH11231. 
This research is supported in part by NSF CISE Expeditions Award CCF-1139158, LBNL Award
7076018, DARPA XData Award FA8750-12-2-0331, and gifts from Amazon Web
Services, Google, SAP, The Thomas and Stacey Siebel Foundation, Adatao, Adobe,
Apple, Inc., Blue Goji, Bosch, C3Energy, Cisco, Cray, Cloudera, EMC2, Ericsson,
Facebook, Guavus, HP, Huawei, Informatica, Intel, Microsoft, NetApp, Pivotal,
Samsung, Schlumberger, Splunk, Virdata and VMware.

{
\bibliography{main}
\bibliographystyle{alpha}
}

\appendix
\COND{}{
\section{Block coordinate descent for random features}
The algorithm we run for block coordinate descent on random feature problems
is described, for completeness, in Algorithm~\ref{alg:rf_bcd}.

}

\COND{}{
  \section{Scalability of RBF kernel generation}
  \label{sec:appendix:scaling}

  \begin{figure*}[t!]
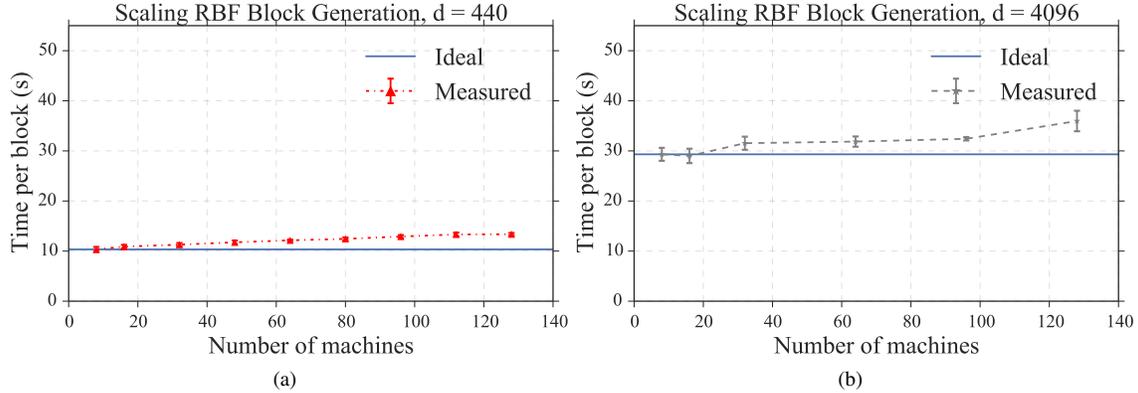

    \centering
    \subfigure[]{\label{fig:timit-rbf-scale}\includegraphics[width=0.45\textwidth]{plots/random/rbf_bench_points-multiple-feats}}
    \subfigure[]{\label{fig:cifar-rbf-scale}\includegraphics[width=0.45\textwidth]{plots/random/rbf_bench_points-multiple-feats-4k}}
    \caption{Time taken to compute one block of the RBF kernel as we scale the number of examples and
  the number of machines used.}
    \label{fig:scaling-rbf}
  \end{figure*}

  To see how the cost of kernel generation changes as dataset size grows, we perform a weak scaling experiment
  where we increase the number of examples and the number of machines used while keeping the number of
  examples per machine constant ($16,384$). We run this experiment for $d=440$ and $d=4096$ which are the
  number of features in TIMIT and CIFAR-10 respectively. Figure~\ref{fig:scaling-rbf} contains results
  from this experiment. In the weak scaling scenario, ideal scaling implies that the time to
  generate a block of the kernel matrix remains constant as we increase both the data and the number of
  machines. However, computing a block of the RBF kernel
  involves broadcasting a $b \times d$ matrix to all the machines in the cluster.
  This causes a slight decrease in performance as we go from $8$ to $128$ machines. As broadcast
  routines scale with $O(log(M))$, we believe that our kernel block generation methods will continue to
  scale well for larger datasets.
}

\section{Proof of Theorem~\ref{thm:bcdrate}}
\label{sec:appendix:a}

Recall that $f : \R^d \rightarrow \R^d$ is a strongly convex and smooth
quadratic function with Hessian $\nabla^2 f \in \R^{d \times d}$. Recall we also assume that
$m I_d \preceq \nabla^2 f \preceq L I_d$.

\paragraph{Notation.}
Let $[d] := \{1, ..., d\}$, $b \in [d]$ be a block size, and let $I \in \Omega_b
:= \{ x \in 2^{[d]} : \abs{x} = b \}$ denote an index set.
Recall that $P_I : \R^{d} \rightarrow \R^{d}$ is the projection operator that zeros
out all the coordinates of the input vector not in $I$, i.e.  $(P_I w)_i = w_i
\ind_{i \in I}$, where $w_i$ denotes the $i$-th coordinate of a vector $w$.  It
is easy to see that $P_I$ in matrix form is $P_I = \diag(\ind_{1 \in I}, ...,
\ind_{d \in I}) \in \R^{d \times d}$.

\paragraph{Block Lipschitz constants.}
We now define a restricted notion of Lipschitz continuity which works on blocks.
For an index set $I$, define
\begin{align*}
  L_I := \sup_{w \in \R^d : \norm{w} = 1} \ip{P_I w}{ \nabla^2 f P_I w} = \lambda_{\max} (P_I \nabla^2 f P_I), \qquad L_{\max,b} := \max_{I \in \Omega_b} L_I \:.
\end{align*}

\paragraph{Update rule.}
Recall that block coordinate descent works by fixing some $w^0 \in \R^d$
and iterating the mapping
\begin{align}
  w^{k+1} \gets \argmin_{w \in \R^d} f(P_{I_k} w + P_{I_k^c} w^{k}) \:, \label{eq:update}
\end{align}
where $I_0, I_1, ... \in \Omega_b$ are chosen by some (random) strategy.  A
common choice is to choose $I_k$ uniformly at random from $\Omega_b$, and to
make this choice independent of the history up to time $k$. This is the sampling strategy
we will study.
We now have enough notation to state and prove our basic inequality for coordinate descent.
This is not new, but we record it for completeness, and because it is simple.
\begin{proposition}
\label{prop:basicinequality}
For every $k \geq 0$, we have that the $k+1$-th iterate
satisfies the inequality
\begin{align}
  f(w^{k+1}) \leq f(w^k) - \frac{1}{2 L_{I_k}} \norm{P_{I_k} \nabla f(w^k)}^2 \:. \label{eq:basicinequality}
\end{align}
\end{proposition}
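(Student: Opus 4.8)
The plan is to use that $f$ is quadratic, so the second-order Taylor expansion of $f$ about $w^k$ along the block direction $P_{I_k}v$ is an \emph{exact} identity, and then to control the quadratic term by the block Lipschitz constant $L_{I_k}$. The first step is to rewrite the update \eqref{eq:update}: since $P_{I_k}w + P_{I_k^c}w^k = w^k + P_{I_k}(w - w^k)$, the minimization defining $w^{k+1}$ is a minimization of $v \mapsto f(w^k + P_{I_k}v)$ over $v \in \R^d$, so $f(w^{k+1}) = \min_{v \in \R^d} f(w^k + P_{I_k}v)$.

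The second step is the quadratic expansion: for any $v$, $f(w^k + P_{I_k}v) = f(w^k) + \ip{\nabla f(w^k)}{P_{I_k}v} + \tfrac12 \ip{P_{I_k}v}{\nabla^2 f\, P_{I_k}v}$, with equality because $f$ is quadratic. I would then bound $\ip{P_{I_k}v}{\nabla^2 f\, P_{I_k}v} \leq L_{I_k}\norm{P_{I_k}v}^2$; this is exactly where the definition $L_I = \lambda_{\max}(P_I \nabla^2 f\, P_I)$ is used, via the observation that the quadratic form $u \mapsto \ip{u}{\nabla^2 f\, u}$ restricted to vectors supported on $I$ is bounded by $L_I$ per unit norm, so normalizing by $\norm{P_{I_k}v}^2$ (rather than $\norm{v}^2$) is legitimate because the form vanishes on coordinates outside $I_k$.

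The third step is to minimize the upper bound $f(w^k) + \ip{\nabla f(w^k)}{P_{I_k}v} + \tfrac{L_{I_k}}{2}\norm{P_{I_k}v}^2$ over $v$, or equivalently to substitute the explicit choice $v = -\tfrac{1}{L_{I_k}}P_{I_k}\nabla f(w^k)$ (legitimate since $P_{I_k}$ is symmetric and idempotent). Simplifying yields precisely $f(w^k) - \tfrac{1}{2L_{I_k}}\norm{P_{I_k}\nabla f(w^k)}^2$, and since $f(w^{k+1})$ does not exceed this value, \eqref{eq:basicinequality} follows.

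I do not anticipate a substantive obstacle: this is the textbook descent lemma adapted to a coordinate block. The only things needing care are the bookkeeping around $P_{I_k}$ and two degenerate cases: $L_{I_k} = 0$ cannot occur because every principal submatrix of $\nabla^2 f \succeq m I_d$ has smallest eigenvalue at least $m > 0$ by eigenvalue interlacing, so no division by zero arises; and if $P_{I_k}\nabla f(w^k) = 0$ the claimed inequality reads $f(w^{k+1}) \leq f(w^k)$, which is immediate since $w^k$ is itself feasible in the minimization defining $w^{k+1}$.
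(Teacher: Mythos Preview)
Your proposal is correct and follows essentially the same approach as the paper: both arguments use that $f(w^{k+1}) \leq f(w^k + P_{I_k}v)$ for any $v$, pick the specific step $v = -\tfrac{1}{L_{I_k}} P_{I_k}\nabla f(w^k)$, Taylor-expand, and bound the second-order term via the definition of $L_{I_k}$. The only cosmetic differences are that the paper phrases the Taylor expansion with a mean-value point (which is of course exact here since $f$ is quadratic), and that you additionally handle the degenerate cases $L_{I_k}=0$ and $P_{I_k}\nabla f(w^k)=0$, which the paper does not bother to mention.
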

\begin{proof}
Put $z^k := w^k - \alpha_k P_{I_k} \nabla f(w^k)$.
The update equation in \eqref{eq:update} gives us, trivially, for any $\alpha_k \in \R$,
\begin{align*}
  f(w^{k+1}) \leq f(z^k) \:.
\end{align*}
Now, by Taylor's theorem, for some $t \in (0, 1)$, setting $\alpha_k := 1/L_{I_k}$,
\begin{align*}
  f(z_k) &= f(w^k) - \alpha_k \ip{P_{I_k} \nabla f(w^k)}{ \nabla f(w^k)} + \frac{\alpha_k^2}{2} \ip{P_{I_k} \nabla f(w^k)}{ \nabla^2 f(t w^k + (1-t) (z^k-w^k)) P_{I_k} \nabla f(w^k) } \nonumber \\
                                          &\stackrel{(a)}{\leq} f(w^k) - \alpha_k \norm{P_{I_k} \nabla f(w^k)}^2 + \frac{L_{I_k} \alpha_k^2}{2} \norm{P_{I_k} \nabla f(w^k)}^2 \nonumber \\
                                          &= f(w^k) - \frac{1}{2 L_{I_k}} \norm{P_{I_k} \nabla f(w^k)}^2 \nonumber \:.
\end{align*}
where (a) uses the fact that Euclidean projection is idempotent and also the
definition of $L_{I_k}$.
\end{proof}
We now prove a structural result. The main idea is as follows.  Suppose we have
some subset $\mathcal{G} \subset \Omega_b$ where $\max_{I \in \mathcal{G}} L_I$
is much smaller compared to $L_{\max,b}$.  If this subset is a significant
portion of $\Omega_b$, then we expect to be able to improve the basic rate. The
following result lays the groundwork for us to be able to make this kind of
claim.
\begin{lemma}
\label{lemma:rateset}
Let $\mathcal{G} \subset \Omega_b$ be such that
$\frac{\abs{\mathcal{G}^c}}{\abs{\Omega_b}} = \alpha \frac{b}{d}$ for $\alpha
\in [0, 1]$.  Let each $I_k$ be independent and drawn uniformly from
$\Omega_b$. Then, after $\tau$ iterations, the iterate $w^\tau$ satisfies
\begin{align*}
  \E f(w^\tau) - f_* \leq \left(1 - \frac{b}{d} \left((1-\alpha) \frac{m}{\max_{I \in \mathcal{G}} L_I} + \alpha \frac{m}{L_{\max,t}} \right)   \right)^\tau(f(w^0) - f_*) \:.
\end{align*}
\end{lemma}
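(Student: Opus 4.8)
The plan is to combine the one-step descent inequality of Proposition~\ref{prop:basicinequality} with a careful accounting of the contribution from the ``good'' index sets in $\mathcal{G}$, and then convert the resulting gradient-norm decrease into a functional decrease via strong convexity. Throughout write $L_{\mathcal{G}} := \max_{I \in \mathcal{G}} L_I$, and note $L_{\mathcal{G}} \leq L_{\max,b}$ since $\mathcal{G} \subseteq \Omega_b$.

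First I would condition on $w^k$, set $g := \nabla f(w^k)$, and take expectation over $I_k$ in \eqref{eq:basicinequality}, so the quantity to lower bound is $\E_{I_k}\bigl[\tfrac{1}{L_{I_k}}\norm{P_{I_k} g}^2\bigr]$. The key observation is the pointwise bound $\tfrac{1}{L_{I_k}} \geq \tfrac{1}{L_{\max,b}} + \bigl(\tfrac{1}{L_{\mathcal{G}}} - \tfrac{1}{L_{\max,b}}\bigr)\ind_{I_k \in \mathcal{G}}$, which holds because $L_{I_k} \leq L_{\mathcal{G}}$ when $I_k \in \mathcal{G}$ and $L_{I_k} \leq L_{\max,b}$ always, and the coefficient $\tfrac{1}{L_{\mathcal{G}}} - \tfrac{1}{L_{\max,b}}$ is nonnegative. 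Multiplying by $\norm{P_{I_k} g}^2 \geq 0$ and taking expectations reduces everything to two elementary facts about uniform $b$-subsets of $[d]$: $\E_{I_k}\norm{P_{I_k} g}^2 = \tfrac{b}{d}\norm{g}^2$ (each coordinate is kept with probability $b/d$), and $\E_{I_k}[\ind_{I_k \notin \mathcal{G}}\norm{P_{I_k} g}^2] \leq \norm{g}^2\,\Pr[I_k \notin \mathcal{G}] = \alpha\tfrac{b}{d}\norm{g}^2$, which together give $\E_{I_k}[\ind_{I_k \in \mathcal{G}}\norm{P_{I_k} g}^2] \geq (1-\alpha)\tfrac{b}{d}\norm{g}^2$. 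Combining these yields $\E_{I_k}\bigl[\tfrac{1}{L_{I_k}}\norm{P_{I_k} g}^2\bigr] \geq \tfrac{b}{d}\bigl((1-\alpha)\tfrac{1}{L_{\mathcal{G}}} + \alpha\tfrac{1}{L_{\max,b}}\bigr)\norm{g}^2$.

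Next I would close the loop with strong convexity. Since $f$ is quadratic with $\nabla^2 f \succeq m I_d$, it obeys the Polyak--{\L}ojasiewicz inequality $\norm{\nabla f(w)}^2 \geq 2m\,(f(w) - f_*)$, obtained by writing $\nabla f(w) = \nabla^2 f\,(w - w_*)$ and diagonalizing $\nabla^2 f$. Substituting this and the bound above into the conditional expectation of \eqref{eq:basicinequality} gives the one-step contraction $\E[f(w^{k+1}) - f_* \mid w^k] \leq \bigl(1 - \tfrac{bm}{d}\bigl((1-\alpha)\tfrac{1}{L_{\mathcal{G}}} + \alpha\tfrac{1}{L_{\max,b}}\bigr)\bigr)(f(w^k) - f_*)$; taking total expectations and iterating $\tau$ times (using independence of the $I_k$) yields the stated bound.

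The one delicate point is the step producing the $(1-\alpha)$ factor: obtaining it in front of $1/L_{\mathcal{G}}$ requires arranging the pointwise lower bound on $1/L_{I_k}$ so that the ``bad'' event $\{I_k \notin \mathcal{G}\}$ is only ever used to \emph{subtract} probability mass and never to add it, which is why one bounds $\E_{I_k}[\ind_{I_k \notin \mathcal{G}}\norm{P_{I_k} g}^2]$ from above rather than attempting to bound the good-set expectation directly. Everything else is routine.
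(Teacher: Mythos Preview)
Your proof is correct and follows essentially the same approach as the paper: split the conditional expectation over $\{I_k\in\mathcal{G}\}$ versus $\{I_k\notin\mathcal{G}\}$, bound the bad-set contribution using $\Pr[I_k\notin\mathcal{G}]=\alpha\,b/d$, and close with strong convexity. The only cosmetic difference is that the paper packages the calculation as $\nabla f(w^k)^\T(\E Q_{I_k})\nabla f(w^k)$ for a diagonal matrix $Q_{I_k}$ and then lower-bounds $\lambda_{\min}(\E Q_{I_k})$ coordinate-by-coordinate, whereas you obtain the same bound directly via the scalar inequality $\norm{P_{I_k}g}^2\leq\norm{g}^2$; unwinding either argument yields the identical per-coordinate estimate $\Pr[\ell\in I_k,\,I_k\notin\mathcal{G}]\leq\Pr[I_k\notin\mathcal{G}]$.
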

\begin{proof}
The basic proof structure is based on Theorem 1 of \cite{wright15}.
The idea here is to compute the conditional expectation of
$\frac{1}{L_{I_k}} \norm{P_{I_k} \nabla f(w^k)}^2$ w.r.t. $w^k$, taking
advantage of the structure provided by $\mathcal{G}$. Put $t := \max_{I \in \mathcal{G}} L_I$.
Then,
\begin{align}
  \E( \frac{1}{L_{I_k}} \norm{P_{I_k} \nabla f(w^k)}^2 | w^k ) &= \E( \frac{1}{L_{I_k}} \norm{P_{I_k} \nabla f(w^k)}^2 \ind_{I_k \in \mathcal{G}} | w^k ) + \E( \frac{1}{L_{I_k}} \norm{P_{I_k} \nabla f(w^k)}^2 \ind_{I_k \not\in \mathcal{G}} | w^k ) \nonumber \\
                                                               &\geq \frac{1}{t} \E( \norm{P_{I_k} \nabla f(w^k)}^2 \ind_{I_k \in \mathcal{G}} | w^k ) + \frac{1}{L_{\max,b}} \E( \norm{P_{I_k} \nabla f(w^k)}^2 \ind_{I_k \not\in \mathcal{G}} | w^k ) \nonumber \\
                                                               &= \nabla f(w^k)^\T \E( \frac{1}{t} P_{I_k} \ind_{I_k \in \mathcal{G}} + \frac{1}{L_{\max,b}} P_{I_k} \ind_{I_k \not\in \mathcal{G}} ) \nabla f(w^k) \nonumber \\
                                                               &\stackrel{(a)}{=} \nabla f(w^k)^\T \E Q_{I_k} \nabla f(w^k) \nonumber \\
                                                               &\geq \lambda_{\min}(\E Q_{I_k}) \norm{\nabla f(w^k)}^2 \label{eq:bcdrate:one} \:,
\end{align}
where in (a) we define $Q_I$ to be diagonal PSD matrix $Q_I :=  \frac{1}{t}
P_{I} \ind_{I \in \mathcal{G}} + \frac{1}{L_{\max,b}} P_{I} \ind_{I \not\in
\mathcal{G}}$.  Let us look at $(\E Q_{I_k})_{\ell \ell}$.  It is not hard to see that
\begin{align*}
    \abs{\Omega_b} (\E Q_{I_k})_{\ell \ell} = \frac{1}{t} \abs{\{ I \in \mathcal{G} : \ell \in I \}} + \frac{1}{L_{\max,b}} \abs{\{ I \in \mathcal{G}^c : \ell \in I \}} \:.
\end{align*}
We know that $\abs{\{ I \in \mathcal{G} : \ell \in I \}} + \abs{\{ I \in \mathcal{G}^c : \ell \in I \}} = \abs{\Omega_b} \frac{b}{d}$.
Since $t \leq L_{\max, b}$, the quantity above is lower bounded when we make $\abs{\{ I \in \mathcal{G}^c : \ell \in I \}}$ as large as possible.
Therefore, since we assume that $\abs{\mathcal{G}^c} \leq \abs{\Omega_b} \frac{b}{d}$,
\begin{align*}
  \frac{1}{t} \abs{\{ I \in \mathcal{G} : \ell \in I \}} + \frac{1}{L_{\max,b}} \abs{\{ I \in \mathcal{G}^c : \ell \in I \}} \geq \frac{1}{t} \left( \abs{\Omega_b} \frac{b}{d} - \abs{\mathcal{G}^c} \right) + \frac{1}{L_{\max,b}} \abs{\mathcal{G}^c} \:,
\end{align*}
from which we conclude
\begin{align}
    \lambda_{\min}(\E Q_{I_k}) \geq \frac{1}{t} \left(\frac{b}{d} - \frac{\abs{\mathcal{G}^c}}{\abs{\Omega_b}}\right) + \frac{1}{L_{\max,b}}\frac{\abs{\mathcal{G}^c}}{\abs{\Omega_b}} = \frac{b}{d} \left((1-\alpha) \frac{1}{t} + \alpha \frac{1}{L_{\max,t}} \right) \:. \label{eq:bcdrate:two}
\end{align}
Combining \eqref{eq:bcdrate:one} and \eqref{eq:bcdrate:two} with
Proposition~\ref{prop:basicinequality} followed by iterating expectations, we
conclude that
\begin{align*}
  \E f(w^{k+1}) \leq \E f(w^k) - \frac{1}{2} \frac{b}{d} \left((1-\alpha) \frac{1}{t} + \alpha \frac{1}{L_{\max,t}} \right) \E \norm{\nabla f(w^k)}^2 \:.
\end{align*}
The rest of the proof proceeds identically to Theorem 1 of \cite{wright15}, using
$m$-strong convexity to control $\norm{\nabla f(w^k)}^2$ from below.
\end{proof}
The remainder of the proof involves showing the existence of a set
$\mathcal{J} \subset \Omega_b$ that satisfies the hypothesis of
Lemma~\ref{lemma:rateset}. To show this, we need some basic tools
from random matrix theory. The following matrix Chernoff inequality is
Theorem 2.2 from \citesup{tropp11}.
\begin{theorem}
\label{thm:chernoff}
Let $\mathcal{X}$ be a finite set of PSD matrices of dimension $k$, and suppose
$\max_{X \in \mathcal{X}} \lambda_{\max}(X) \leq B$.
Sample $\{X_1, ... X_\ell\}$ uniformly at random from $\mathcal{X}$ without
replacement. Put $\mu_{\max} := \ell \cdot \lambda_{\max}(\E X_1)$.
Then, for any $\delta \geq 0$,
\begin{align*}
  \Pr\left\{ \lambda_{\max} (\sum_{j=1}^{\ell} X_j) \geq (1+\delta) \mu_{\max} \right\} \leq k \cdot \left[ \frac{e^\delta}{(1+\delta)^{1+\delta}} \right]^{\mu_{\max}/B} \:.
\end{align*}
\end{theorem}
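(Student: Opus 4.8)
The statement is Theorem~2.2 of \citesup{tropp11}, so in the strictest sense the proof is a pointer to that reference; what follows is the argument one reconstructs. The plan is to combine the matrix Laplace transform method with a classical reduction from sampling without replacement to sampling with replacement. First I would fix $\theta > 0$ and apply Markov's inequality in the transform domain: since $e^{\lambda_{\max}(A)} = \lambda_{\max}(e^{A})$ for symmetric $A$, and $\lambda_{\max}(e^{\theta Y}) \le \Tr e^{\theta Y}$ because $e^{\theta Y} \succeq 0$,
\begin{align*}
  \Pr\Big\{ \lambda_{\max}\Big(\sum_{j=1}^\ell X_j\Big) \ge (1+\delta)\mu_{\max} \Big\} \le e^{-\theta(1+\delta)\mu_{\max}} \, \E \Tr \exp\Big(\theta \sum_{j=1}^\ell X_j\Big) \:.
\end{align*}

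The crux is controlling the matrix moment generating function $\E \Tr \exp(\theta \sum_j X_j)$ when the $X_j$ are drawn \emph{without} replacement, since the usual tools (Lieb's concavity theorem, and the Ahlswede--Winter subadditivity of the matrix cumulant generating function) are tailored to independent summands. Here I would invoke Hoeffding's observation (1963) that, for a continuous convex function, the expectation under sampling without replacement is dominated by the expectation under sampling with replacement; applied to the convex map $A \mapsto \Tr \exp(\theta A)$ on symmetric matrices, this gives $\E \Tr \exp(\theta \sum_j X_j) \le \E \Tr \exp(\theta \sum_j \widetilde{X}_j)$ with $\widetilde{X}_1, \dots, \widetilde{X}_\ell$ i.i.d. uniform on $\X$. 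The marginal law of a single draw is the same under both schemes, so $\mu_{\max} = \ell\,\lambda_{\max}(\E \widetilde{X}_1)$ and the uniform bound $B$ are unchanged.

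With independence restored, the remainder is the standard matrix Chernoff computation: subadditivity of the matrix CGF gives $\E \Tr \exp(\theta \sum_j \widetilde{X}_j) \le \Tr \exp(\ell \log \E e^{\theta \widetilde{X}_1})$; the scalar inequality $e^{\theta s} \le 1 + \tfrac{e^{\theta B}-1}{B}s$ on $[0,B]$ yields $\E e^{\theta \widetilde{X}_1} \preceq I + \tfrac{e^{\theta B}-1}{B}\E\widetilde{X}_1 \preceq \exp\big(\tfrac{e^{\theta B}-1}{B}\E\widetilde{X}_1\big)$, and operator monotonicity of $\log$ then bounds $\ell \log \E e^{\theta \widetilde X_1} \preceq \tfrac{\ell(e^{\theta B}-1)}{B}\E\widetilde X_1$; bounding the trace of the resulting matrix exponential by $k$ times its top eigenvalue gives $\E \Tr \exp(\theta \sum_j \widetilde X_j) \le k \exp\big(\tfrac{e^{\theta B}-1}{B}\mu_{\max}\big)$. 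Substituting back and minimizing over $\theta$ — the optimizer is $\theta = \tfrac1B\log(1+\delta)$ — collapses the exponent to $\tfrac{\mu_{\max}}{B}\big(\delta - (1+\delta)\log(1+\delta)\big)$, which is exactly $\log\big[e^\delta/(1+\delta)^{1+\delta}\big]^{\mu_{\max}/B}$. I expect the only genuine obstacle to be the without-replacement step: one must justify the convex-order domination at the level of matrix-valued functions rather than scalars, which is precisely the technical content supplied by \citesup{tropp11} via Hoeffding's reduction; every other step is routine.
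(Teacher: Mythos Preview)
The paper does not prove this theorem at all; it is simply quoted as Theorem~2.2 of \citesup{tropp11} and used as a black box. Your proposal correctly identifies this and goes further by sketching the underlying argument (Laplace transform, Hoeffding-type reduction from without- to with-replacement sampling, then the standard matrix Chernoff computation via Lieb subadditivity), which is a faithful outline of Tropp's proof and more than the paper itself supplies.
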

The inequality of Theorem~\ref{thm:chernoff} can be weakened to a more useful
form, which we will use directly (see e.g. Section 5.1 of \citesup{tropp15}).
The following bound holds for all $t \geq e$,
\begin{align}
  \Pr\left\{ \lambda_{\max} (\sum_{j=1}^{\ell} X_j) \geq t \mu_{\max} \right\} \leq k \cdot (e/t)^{t \mu_{\max}/B} \:. \label{eq:chernoff_weakened}
\end{align}
We now prove, for arbitrary fixed matrices, a result which controls the
behavior of the top singular value of submatrices of our original matrix.  Let
$A \in \R^{n \times p}$ be fixed.  Define for any $t > 0$,
\begin{align*}
  \mathcal{J}_t(A) := \left\{ I \in \Omega_b : \lambda_{\mathrm{max}}(P_I^\T A^\T A P_I) < \frac{tb}{p} \lambda_{\mathrm{max}}(A^\T A) \right\} \:.
\end{align*}
We now establish a result controlling the size of $\mathcal{J}_t(A)$ from below.
We do this via a probabilistic argument, taking advantage of the matrix Chernoff inequality.
\begin{lemma}
\label{lemma:control}
Fix an $A \in \R^{n \times p}$ and $b \in \{1, ..., p\}$ and $\delta \in (0, 1)$.
Suppose that $I$ is drawn uniformly at random from $\Omega_b$.
We have that
\beqs
\Pr\left\{ \lambda_{\max}(P_I A^\T A P_I) \geq e^2 \frac{b}{p} \lambda_{\max}(A^\T A) + \norm{\diag(A^\T A)}_\infty \log\left(\frac{n}{\delta}\right) \right\} \leq \delta \:.
\eeqs
\end{lemma}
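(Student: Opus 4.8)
The plan is to rewrite $\lambda_{\max}(P_I A^\T A P_I)$ so that the random index set $I$ enters additively, and then apply the weakened matrix Chernoff bound \eqref{eq:chernoff_weakened} a single time with a carefully chosen scaling parameter. Because $P_I$ is an orthogonal projection, the matrices $(AP_I)^\T(AP_I) = P_I A^\T A P_I$ and $(AP_I)(AP_I)^\T = AP_I A^\T$ have the same nonzero eigenvalues, so $\lambda_{\max}(P_I A^\T A P_I) = \lambda_{\max}(AP_I A^\T)$. Writing $c_\ell := Ae_\ell \in \R^n$ for the $\ell$-th column of $A$ and using $P_I = \sum_{\ell\in I} e_\ell e_\ell^\T$, we get $AP_I A^\T = \sum_{\ell\in I} c_\ell c_\ell^\T$. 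Hence, identifying the uniform $b$-subset $I$ with $b$ indices drawn without replacement from $[p]$ and setting $X_\ell := c_\ell c_\ell^\T$, the quantity $\lambda_{\max}(P_I A^\T A P_I)$ equals $\lambda_{\max}$ of a sum of $b$ PSD matrices of dimension $n$ sampled uniformly without replacement --- exactly the hypothesis of Theorem~\ref{thm:chernoff}.

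Next I would read off the three parameters of Theorem~\ref{thm:chernoff}. The dimension is $k = n$; the uniform bound is $B = \max_{\ell\in[p]} \lambda_{\max}(c_\ell c_\ell^\T) = \max_\ell \norm{c_\ell}^2 = \norm{\diag(A^\T A)}_\infty$; and since the mean of a single uniformly chosen sample is $\frac{1}{p}\sum_{j=1}^{p} c_j c_j^\T = \frac{1}{p}AA^\T$, we have $\mu_{\max} = b\,\lambda_{\max}(\frac{1}{p} AA^\T) = \frac{b}{p}\lambda_{\max}(A^\T A)$. One may assume $A \neq 0$, so that $B,\mu_{\max} > 0$; otherwise both sides of the claimed inequality vanish and there is nothing to prove.

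I would then apply \eqref{eq:chernoff_weakened} with $t := \max\!\left(e^2,\ \frac{B}{\mu_{\max}}\log\frac{n}{\delta}\right)$, which is legitimate since $t \geq e^2 \geq e$. As $t \geq e^2$ we have $e/t \leq e^{-1}$, so the deviation probability is at most $n(e/t)^{t\mu_{\max}/B} \leq n\,e^{-t\mu_{\max}/B} \leq n\,e^{-\log(n/\delta)} = \delta$, where we used $t\mu_{\max}/B \geq \log(n/\delta)$, which holds by the second term in the maximum. On the complementary event, $\lambda_{\max}(P_I A^\T A P_I) = \lambda_{\max}(AP_I A^\T) < t\mu_{\max} = \max(e^2\mu_{\max},\, B\log(n/\delta)) \leq e^2\mu_{\max} + B\log(n/\delta)$, and substituting the values of $\mu_{\max}$ and $B$ gives exactly the bound in the statement.

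The only place that needs care is the choice of $t$: a single $t$ must stay at least $e$ so that the Chernoff tail applies, must be large enough that the failure probability collapses to $\delta$, and must still keep $t\mu_{\max}$ below the additive quantity $e^2\mu_{\max} + B\log(n/\delta)$ regardless of whether $\mu_{\max}$ dominates $B\log(n/\delta)$ or vice versa. Taking the maximum of the two natural candidate thresholds $e^2$ and $\frac{B}{\mu_{\max}}\log(n/\delta)$ achieves all three at once; equivalently, one could split into two cases according to the sign of $e^2\mu_{\max} - B\log(n/\delta)$. Everything else is routine bookkeeping about eigenvalues of $AA^\T$ versus $A^\T A$ and of column outer products.
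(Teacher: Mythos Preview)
Your proof is correct and follows essentially the same route as the paper: rewrite $\lambda_{\max}(P_I A^\T A P_I)=\lambda_{\max}(AP_I A^\T)=\lambda_{\max}\big(\sum_{\ell\in I} c_\ell c_\ell^\T\big)$, identify $k=n$, $B=\norm{\diag(A^\T A)}_\infty$, $\mu_{\max}=\tfrac{b}{p}\lambda_{\max}(A^\T A)$, and invoke the weakened matrix Chernoff bound \eqref{eq:chernoff_weakened} once. The only cosmetic difference is the choice of scaling: the paper takes $t=\tfrac{B}{\mu_{\max}}\log\tfrac{n}{\delta}+e^2$ (a sum), whereas you take $t=\max\!\big(e^2,\tfrac{B}{\mu_{\max}}\log\tfrac{n}{\delta}\big)$; both satisfy $t\ge e^2$ and $t\mu_{\max}/B\ge\log(n/\delta)$, and both yield $t\mu_{\max}\le e^2\mu_{\max}+B\log(n/\delta)$, so the endgame is identical. (One very minor quibble: your aside about the $A=0$ case is not quite right---the threshold collapses to $0$ and the event $\{0\ge 0\}$ has probability $1$, so the lemma as stated is vacuous there; the paper does not treat this degenerate case either.)
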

\begin{proof}
This argument closely follows Section 5.2.1 of \citesup{tropp15}.
First, we observe that we can write $A P_I$ as $A P_I = \sum_{i=1}^{p} \ind_{\{i \in I\}} A_{:i}
e_i^\T$, where $A_{:i} \in \R^n$ denotes the $i$-th column of $A$. Also, since
$\lambda_{\max}( P_I^\T A^\T A P_I ) = \lambda_{\max}(A P_I P_I^\T
A^\T)$, we focus our efforts on the latter.
Now,
\begin{align*}
  AP_I P_I^\T A^\T = (\sum_{i=1}^{p} \ind_{\{i \in I\}} A_{:i} e_i^\T)(\sum_{i=1}^{p} \ind_{\{i \in I\}} A_{:i} e_i^\T)^\T = \sum_{i=1}^{p} \ind_{\{i \in I\}} A_{:i} A_{:i}^\T \:.
\end{align*}
Let $\mathcal{X} := \{ A_{:i}A_{:i}^\T : i \in \{1, ..., p\} \}$. The
calculation above means we can equivalently view the random variable $A P_I P_I^\T
A^\T$ as the sum $\sum_{i=1}^{b} X_i$ where $X_1, ..., X_b$ are sampled from
$\mathcal{X}$ without replacement.  Put $\mu_{\mathrm{max}} := b \cdot
\lambda_{\mathrm{max}}(\E X_1) = \frac{b}{p} \lambda_{\mathrm{max}} (AA^\T) = \frac{b}{p} \lambda_{\mathrm{max}} (A^\T A)$. Observe that
$\max_{1 \leq i \leq p} \lambda_{\max}( A_{:i} A_{:i}^\T ) = \max_{1 \leq i \leq p} \norm{A_{:i}}^2 := B$.
This puts us in a position to apply Theorem~\ref{thm:chernoff}, using the form
given by \eqref{eq:chernoff_weakened}, from which we conclude for all $t \geq
e$,
\begin{align}
  \Pr\left\{ \lambda_{\max}(P_I^\T A^\T A P_I) \geq \frac{tb}{p} \lambda_{\max}(A^\T A) \right\} \leq n (e/t)^{ \frac{tb}{Bp} \cdot \lambda_{\max}(A^\T A) } \: \label{eq:chernoff_intermediate},
\end{align}
To conclude, set
\beq
  t = \frac{p}{b} \frac{B}{\lambda_{\max}(A^\T A)} \log\frac{n}{\delta} + e^2 \:,\label{eq:setting_of_t}
\eeq
and plug into \eqref{eq:chernoff_intermediate}. The result follows by noting
that $\max_{1 \leq i \leq p} \norm{A_{:i}}^2 = \max_{1 \leq i \leq p} e_i^\T
A^\T A e_i = \max_{1 \leq i \leq p} (A^\T A)_{ii}$.
\end{proof}
We are now in a position to prove Theorem~\ref{thm:bcdrate}.
\begin{proof}
(Of Theorem~\ref{thm:bcdrate}). Let $\nabla^2 f = Q^\T Q$ be a factorization of
$\nabla^2 f$ which exists since $\nabla^2 f$ is PSD. Note that we must have $Q
\in \R^{d \times d}$ because $\nabla^2 f$ is full rank. Recall that
$\lambda_{\max}(Q^\T Q) \leq L$.  First, we note that
\begin{align*}
  \Pr\left\{ \lambda_{\max}(P_I^\T A^\T A P_I) \geq \frac{tb}{p} \lambda_{\max}(A^\T A) \right\} =
  \Pr(I \in \mathcal{J}_t^c(A)) = \E \ind_{\{I \in \mathcal{J}_t^c(A)\}} = \frac{ \abs{\mathcal{J}_t^c(A)} }{ \abs{\Omega_b} } \:.
\end{align*}
Setting $t$ as in \eqref{eq:setting_of_t} and invoking
Lemma~\ref{lemma:control}, we have that every $I \in \mathcal{J}_t(Q)$
satisfies
\begin{align*}
  \lambda_{\max}(P_I Q^\T Q P_I) < \norm{\diag(\nabla^2 f)}_\infty \log(2d^2/b) + e^2 \frac{b}{d} L \:.
\end{align*}
The result follows immediately by an application of Lemma~\ref{lemma:rateset}
\end{proof}

\section{Proofs for Section~\ref{sec:rates:kernelopt}}
\label{sec:appendix:tabrates}

\subsection{Derivation of rates in Table~\ref{tab:rates}}
\label{sec:appendix:tabrates:parta}

\paragraph{Full kernel.}
As noted in Section~\ref{sec:algorithms},
we actually run Gauss-Seidel on $(K+n\lambda I_n) \alpha = Y$,
which can be seen as coordinate descent on the program
\begin{align*}
    \min_{\alpha \in \R^{n \times k}} \frac{1}{2} \ip{\alpha}{K\alpha} + \frac{n\lambda}{2} \norm{\alpha}_F^2 - \ip{Y}{\alpha} \:.
\end{align*}
Note that the objective is a strongly convex function with Hessian given as
$D^2 f(\alpha) [H, H] = \ip{H}{(K+n\lambda I_n)H}$.
Theorem~\ref{thm:bcdrate} tells
us that setting $b \gtrsim n\log{n} \frac{\norm{\diag(K+n\lambda I_n)}_\infty}{\norm{K + n\lambda I_n}}$,
$\widetilde{O}(1/\lambda)$ iterations are sufficient.  Plugging values
in, we get for $b \gtrsim \log^2{n}$ under (a) and
$b \gtrsim n^{1/(2\beta+1)} \log{n}$ under (b),
the number of iterations is bounded
under (a) by $\widetilde{O}(n)$ and
under (b) by $\widetilde{O}( n^{\frac{2\beta}{2\beta+1}} )$.

\paragraph{\nystrom{} approximation.}
We derive a rate for coordinate descent on \eqref{eq:kernelnystrom}. We use the
regularized variant, which ensures that \eqref{eq:kernelnystrom} is strongly
convex. Let $p \leq n$ denote the number of \nystrom{} features,
let $I$ denote the index set of features, and
let $S \in \R^{n \times p}$ be the column selector matrix such that
$K_I = KS$. The Hessian of \eqref{eq:kernelnystrom}
is given by
\beqs
	n D^2 f [H, H] = \ip{H}{(S^\T K(K+n\lambda I_n) S + n\lambda\gamma I_p)H} \:.
\eeqs
Theorem~\ref{thm:bcdrate} tells us that
we want to set $b \gtrsim p\log{p} \frac{\norm{\diag(S^\T K(K+n\lambda I_n) S + n\lambda\gamma I_p)}_\infty}{\norm{S^\T K(K+n\lambda I_n) S + n\lambda\gamma I_p }}$.
Applying a matrix Chernoff argument
(Lemma~\ref{lemma:control})
to control
$\norm{ S^\T K(K+n\lambda I_n) S }$ from above, we have w.h.p. that
the number of iterations is $\widetilde{O}( p/\lambda \gamma  )$.
Under (a) this is $\widetilde{O}(np/\gamma)$ and
under (b) this is $\widetilde{O}(p n^{\frac{2\beta}{2\beta+1}}/\gamma)$.

To control $b$, we apply a matrix Bernstein argument
(Lemma~\ref{lemma:bernstein_control})
to control $\norm{ S^\T
K(K+n\lambda I_n) S }$ from below w.h.p. This argument shows that when $\lambda
\leq O(1)$ and $p \gtrsim \log{n}$, $\norm{ S^\T K(K+n\lambda I_n) S } \gtrsim
np$, from which we conclude that $b \gtrsim (1+\gamma)\log{n}$.

\paragraph{Random features approximation.}
We now derive a rate for coordinate descent on \eqref{eq:randomfeatures}.
The Hessian of \eqref{eq:randomfeatures} is given by
$nD^2 f(\alpha)[H, H] = \ip{H}{( Z^\T Z + n\lambda I_p) H}$.
Thus by Theorem~\ref{thm:bcdrate}, setting
$b \gtrsim p\log{p}\frac{\norm{\diag(Z^\T Z + n\lambda
I_p)}_\infty}{\norm{Z^\T Z + n\lambda I_p}}$ and applying a matrix Bernstein
argument to control $\norm{Z^\T Z + n\lambda I_p}$ from both directions
(Lemma~\ref{lemma:randomfeatures_control}),
then as long as $p
\gtrsim \log{n}$, we have w.h.p. that this is at most
$\widetilde{O}(1/\lambda)$, which is the same rate as the full kernel.
Furthermore, the block size is $b \gtrsim \log{n}$.

\subsection{Supporting lemmas for Section~\ref{sec:appendix:tabrates:parta}}

For a fixed symmetric $Q$ and random $I$, we want to control $\lambda_{\max}(
P_I Q P_I )$ from below. The matrix Chernoff arguments do not allow us to do
this, so we rely on matrix Bernstein.  The following result is Theorem 2 from
\cite{alaoui15}.
\begin{theorem}
\label{thm:bernstein}
Fix a matrix $\Psi \in \R^{n \times m}$, $p \in \{1, ..., m\}$ and $\beta \in
(0, 1]$.  Let $\psi_i \in \R^{n}$ denote the $i$-th column of $\Psi$. Choose
$i_k$, $k=1,...,p$ from $\{1, ..., m\}$ such that $\Pr(i_k = j) = p_i \geq \beta
\norm{\psi_i}^2/\norm{\Psi}_F^2$. Put $\widetilde{S} \in \R^{n \times p}$ such that
$\widetilde{S}_{ij} = 1/\sqrt{ p \cdot p_{ij}}$ if $i=i_j$ and $0$ otherwise. Then for all $t \geq 0$,
\beqs
    \Pr\left\{ \lambda_{\max}( \Psi\Psi^\T - \Psi \widetilde{S}\widetilde{S}^\T \Psi^\T ) \geq t  \right\} \leq n \exp\left( \frac{-pt^2/2}{\lambda_{\max}(\Psi\Psi^\T)( \norm{\Psi}^2_F/\beta + t/3) } \right) \:.
\eeqs
\end{theorem}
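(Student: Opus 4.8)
The plan is to view $\Psi\widetilde{S}\widetilde{S}^\T\Psi^\T$ as an unbiased, importance-weighted sum of rank-one matrices and then apply a one-sided matrix Bernstein inequality to its deviation from $\Psi\Psi^\T$. Writing the $k$-th column of $\widetilde{S}$ as $\frac{1}{\sqrt{p\,p_{i_k}}}\,e_{i_k}$ (a single rescaled standard basis vector), we have $\Psi\widetilde{S}\widetilde{S}^\T\Psi^\T = \sum_{k=1}^{p} Y_k$ with $Y_k := \frac{1}{p\,p_{i_k}}\psi_{i_k}\psi_{i_k}^\T \succeq 0$. The rescaling is chosen exactly so that $\E Y_k = \frac{1}{p}\sum_{j} p_j\cdot\frac{1}{p_j}\psi_j\psi_j^\T = \frac{1}{p}\Psi\Psi^\T$, hence $\E\sum_k Y_k = \Psi\Psi^\T$. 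Therefore the matrix of interest is a sum of independent, self-adjoint, mean-zero summands, $\Psi\Psi^\T - \Psi\widetilde{S}\widetilde{S}^\T\Psi^\T = \sum_{k=1}^{p} X_k$ with $X_k := \E Y_k - Y_k$, and it suffices to bound $\lambda_{\max}$ of this sum. I will treat the draws $i_1,\dots,i_p$ as i.i.d., which matches the statement.

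Next I would assemble the two quantities matrix Bernstein requires. For the almost-sure bound: since $Y_k \succeq 0$, we have $X_k = \E Y_k - Y_k \preceq \E Y_k = \frac{1}{p}\Psi\Psi^\T$, so $\lambda_{\max}(X_k) \leq \frac{1}{p}\lambda_{\max}(\Psi\Psi^\T) =: R$. For the variance: $\E X_k^2 = \E Y_k^2 - (\E Y_k)^2 \preceq \E Y_k^2$, and since $(\psi\psi^\T)^2 = \norm{\psi}^2\psi\psi^\T$ a direct computation gives $\E Y_k^2 = \frac{1}{p^2}\sum_j \frac{\norm{\psi_j}^2}{p_j}\psi_j\psi_j^\T$. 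This is exactly the point where the hypothesis $p_j \geq \beta\norm{\psi_j}^2/\norm{\Psi}_F^2$ is used: it forces $\norm{\psi_j}^2/p_j \leq \norm{\Psi}_F^2/\beta$, so $\E Y_k^2 \preceq \frac{\norm{\Psi}_F^2}{\beta p^2}\Psi\Psi^\T$, and summing over $k$ gives $\opnorm{\sum_{k=1}^{p}\E X_k^2} \leq \frac{\norm{\Psi}_F^2}{\beta p}\lambda_{\max}(\Psi\Psi^\T) =: \sigma^2$.

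Finally I would invoke the one-sided matrix Bernstein tail bound (e.g. the self-adjoint matrix Bernstein inequality of \cite{tropp11}, used as a black box in the same spirit as the Chernoff bound in Theorem~\ref{thm:chernoff}; see also \cite{tropp15}): for independent, zero-mean, self-adjoint $n\times n$ matrices with $\lambda_{\max}(X_k)\leq R$, one has $\Pr\{\lambda_{\max}(\sum_k X_k)\geq t\} \leq n\exp\big(\frac{-t^2/2}{\sigma^2 + Rt/3}\big)$ for all $t\geq 0$. Substituting $R$ and $\sigma^2$ from the previous step, $\sigma^2 + Rt/3 = \frac{\lambda_{\max}(\Psi\Psi^\T)}{p}\big(\norm{\Psi}_F^2/\beta + t/3\big)$, and the claimed inequality follows after clearing denominators; the factor $n$ is the ambient dimension of the $n\times n$ error matrix.

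The main obstacle here is mild. All of the problem-specific work is in choosing the importance weights correctly — so that the estimator is exactly unbiased and its matrix variance is controlled by $1/\beta$ — and then matching the parameters to an off-the-shelf bound. The one genuinely heavy ingredient, the matrix Bernstein inequality itself (proved via the matrix Laplace-transform method and Lieb's concavity theorem), is used as a black box, exactly as Theorem~\ref{thm:chernoff} uses matrix Chernoff; reproving it from scratch would be the hard part only if a fully self-contained treatment were demanded — otherwise one may simply cite \cite{alaoui15}, whose Theorem 2 this is.
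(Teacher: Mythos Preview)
Your proof is correct and is the standard argument: write the importance-sampled estimator as an i.i.d.\ sum of rank-one PSD matrices, verify unbiasedness, bound $\lambda_{\max}(X_k)$ via $X_k \preceq \E Y_k$, bound the matrix variance using the leverage-score hypothesis $p_j \geq \beta\norm{\psi_j}^2/\norm{\Psi}_F^2$, and plug into one-sided matrix Bernstein. The parameter matching is done correctly and yields exactly the stated tail bound.

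There is nothing to compare against, however: the paper does not prove Theorem~\ref{thm:bernstein}. It is introduced with the sentence ``The following result is Theorem~2 from \cite{alaoui15}'' and then used as a black box to derive Lemma~\ref{lemma:bernstein_control}. You anticipated this yourself in your closing remark. So your proposal is strictly more than what the paper does; it supplies the (correct, standard) proof that the paper simply cites.
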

This paves the way for the following lemma.
\begin{lemma}
\label{lemma:bernstein_control}
Fix a matrix $\Psi \in \R^{n \times m}$, $p \in \{1, ..., m\}$.  Let $\psi_i
\in \R^{n}$ denote the $i$-th column of $\Psi$.  Put $B := \max_{1 \leq i \leq m} \norm{\psi_i}^2$.  Choose $I := (i_1, ..., i_p)$
uniformly at random without replacement from $\{1, ..., m\}$.  Let $S \in \R^{n
\times p}$ be the column selector matrix associated with $I$. Then, with
probability at least $1 - \delta$ over the randomness of $I$,
\beqs
    \lambda_{\max}( \Psi SS^\T \Psi^\T ) \geq \frac{p}{m} \lambda_{\max}( \Psi\Psi^\T ) - \frac{4}{3} \frac{\lambda_{\max}(\Psi\Psi^\T)}{m} \log\left(\frac{n}{\delta}\right) - \sqrt{ \frac{8p}{m} \lambda_{\max}( \Psi\Psi^\T )B \log\left(\frac{n}{\delta}\right)} \:.
\eeqs
\end{lemma}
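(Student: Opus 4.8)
\emph{Proof sketch.} The plan is to invoke the matrix Bernstein inequality of Theorem~\ref{thm:bernstein} with the uniform-without-replacement sampling distribution, and then convert the resulting two-sided tail estimate into the claimed one-sided lower bound via subadditivity of $\lambda_{\max}$. To set this up in the notation of Theorem~\ref{thm:bernstein}: since $I = (i_1, \dots, i_p)$ is drawn uniformly without replacement, each index is equally likely, so $p_i = 1/m$ for every $i$. The requirement $p_i \geq \beta\,\norm{\psi_i}^2/\norm{\Psi}_F^2$ then holds for all $i$ as soon as $\beta \leq \norm{\Psi}_F^2/(mB)$, and since $\norm{\Psi}_F^2 = \sum_{i=1}^{m} \norm{\psi_i}^2 \leq mB$, the choice $\beta := \norm{\Psi}_F^2/(mB)$ is a legitimate probability (so $\beta \le 1$) and makes $\norm{\Psi}_F^2/\beta = mB$. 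With this choice every nonzero entry of the reweighted selector $\widetilde{S}$ appearing in Theorem~\ref{thm:bernstein} equals $\sqrt{m/p}$, i.e. $\widetilde{S} = \sqrt{m/p}\,S$, so that $\Psi\widetilde{S}\widetilde{S}^\T\Psi^\T = \tfrac{m}{p}\,\Psi S S^\T \Psi^\T$; thus the theorem in fact controls $\Psi\Psi^\T - \tfrac{m}{p}\Psi SS^\T\Psi^\T$.

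Next I would apply the inequality and invert its tail. Writing $L := \lambda_{\max}(\Psi\Psi^\T)$, Theorem~\ref{thm:bernstein} gives, for all $t \geq 0$,
\[
  \Pr\left\{ \lambda_{\max}\left(\Psi\Psi^\T - \tfrac{m}{p}\Psi SS^\T\Psi^\T\right) \geq t \right\} \leq n\exp\left(\frac{-pt^2/2}{L\left(mB + t/3\right)}\right) \:.
\]
Equating the right-hand side with $\delta$ amounts to the quadratic inequality $p t^2/2 \geq L(mB + t/3)\log(n/\delta)$ in $t$; solving it and bounding $\sqrt{a+b}\leq\sqrt a+\sqrt b$ shows that a threshold of order $\tfrac{L}{p}\log(n/\delta) + \sqrt{\tfrac{mB}{p}L\log(n/\delta)}$ suffices, and in particular, with room to spare on the constants, with probability at least $1-\delta$,
\[
  \lambda_{\max}\left(\Psi\Psi^\T - \tfrac{m}{p}\Psi SS^\T\Psi^\T\right) \leq \tfrac{4}{3}\,\tfrac{L}{p}\log(n/\delta) + \sqrt{\tfrac{8mB}{p}\,L\log(n/\delta)} \:.
\]

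Finally I would pass to the one-sided bound: by subadditivity of $\lambda_{\max}$ on symmetric matrices (Weyl's inequality), $L \leq \lambda_{\max}\left(\Psi\Psi^\T - \tfrac{m}{p}\Psi SS^\T\Psi^\T\right) + \tfrac{m}{p}\lambda_{\max}(\Psi SS^\T\Psi^\T)$, so on the same high-probability event
\[
  \tfrac{m}{p}\lambda_{\max}(\Psi SS^\T\Psi^\T) \geq L - \tfrac{4}{3}\,\tfrac{L}{p}\log(n/\delta) - \sqrt{\tfrac{8mB}{p}\,L\log(n/\delta)} \:,
\]
and multiplying through by $p/m$ produces exactly the stated inequality. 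I expect the main obstacle to be the bookkeeping in applying Theorem~\ref{thm:bernstein} correctly --- recognizing that uniform sampling without replacement corresponds to the rescaling $\widetilde{S} = \sqrt{m/p}\,S$, so that the theorem governs $\Psi\Psi^\T - \tfrac{m}{p}\Psi SS^\T\Psi^\T$ rather than $\Psi\Psi^\T - \Psi SS^\T\Psi^\T$, checking $\beta \le 1$, and then cleanly inverting the Bernstein tail (solving the quadratic, splitting the square root) to land on the advertised constants.
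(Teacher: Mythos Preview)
Your proposal is correct and follows essentially the same route as the paper: you make the identical choices $p_i=1/m$ and $\beta=\norm{\Psi}_F^2/(mB)$ so that $\widetilde{S}=\sqrt{m/p}\,S$, invoke Theorem~\ref{thm:bernstein}, invert the Bernstein tail via the same quadratic, and then apply subadditivity of $\lambda_{\max}$ (which the paper phrases as ``convexity of $\lambda_{\max}$'') to get the one-sided lower bound. The only cosmetic difference is that the paper rescales to work with $\tfrac{p}{m}\Psi\Psi^\T - \Psi SS^\T\Psi^\T$ and threshold $\tfrac{p}{m}t$, while you keep the equivalent form $\Psi\Psi^\T - \tfrac{m}{p}\Psi SS^\T\Psi^\T$ and multiply by $p/m$ at the end; the constants and logic are identical.
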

\begin{proof}
Put $p_i = 1/m$ for $i=1,...,m$ and and $\beta = \frac{\norm{\Psi}_F^2}{ m B
}$. By definition, $\beta \leq 1$.  In this case, $\widetilde{S} =
\sqrt{\frac{m}{p}} S$. Plugging these constants into
Theorem~\ref{thm:bernstein}, we get that
\beqs
    \Pr\left\{ \lambda_{\max}( \frac{p}{m} \Psi\Psi^\T - \Psi SS^\T \Psi^\T ) \geq \frac{p}{m} t  \right\} \leq n \exp\left( \frac{-pt^2/2}{\lambda_{\max}(\Psi\Psi^\T)( m B + t/3) } \right) \:.
\eeqs
Setting the RHS equal to $\delta$, we get that $t$ is the roots of the quadratic equation
\beqs
    t^2 - \frac{2}{3} \frac{\lambda_{\max}(\Psi\Psi^\T)}{p} \log\left(\frac{n}{\delta}\right) \cdot t - 2 \lambda_{\max}( \Psi\Psi^\T ) \frac{mB}{p} \log\left(\frac{n}{\delta}\right) = 0 \:.
\eeqs
Since solutions to $t^2 - at - b = 0$ satisfy $t \leq 2(a + \sqrt{b})$ when $a, b \geq 0$, from this we conclude
\beqs
    t \leq \frac{4}{3} \frac{\lambda_{\max}(\Psi\Psi^\T)}{p} \log\left(\frac{n}{\delta}\right) + \sqrt{8 \lambda_{\max}( \Psi\Psi^\T ) \frac{mB}{p} \log\left(\frac{n}{\delta}\right)} \:.
\eeqs
Hence,
\beq
    \Pr\left\{ \lambda_{\max}( \frac{p}{m} \Psi\Psi^\T - \Psi SS^\T \Psi^\T ) \geq \frac{4}{3} \frac{\lambda_{\max}(\Psi\Psi^\T)}{m} \log\left(\frac{n}{\delta}\right) + \sqrt{ \frac{8p}{m} \lambda_{\max}( \Psi\Psi^\T )B \log\left(\frac{n}{\delta}\right)}  \right\} \leq \delta \:. \label{eq:boundone}
\eeq
By the convexity of $\lambda_{\max}(\cdot)$,
\beq
    \frac{p}{m} \lambda_{\max}( \Psi\Psi^\T )  = \lambda_{\max}( \Psi SS^\T \Psi^\T + \frac{p}{m} \Psi\Psi^\T - \Psi SS^\T \Psi^\T ) \leq \lambda_{\max}( \Psi SS^\T \Psi^\T ) + \lambda_{\max}(\frac{p}{m} \Psi\Psi^\T - \Psi SS^\T \Psi^\T ) \:. \label{eq:boundtwo}
\eeq
Combining \eqref{eq:boundone} and \eqref{eq:boundtwo} yields the result.
\end{proof}
We now study random features. To do this, we need the following
general variant of matrix Bernstein. The following is Corollary 6.2.1 of \citesup{tropp15}.
\begin{theorem}
\label{thm:bernstein_tropp}
Let $B \in \R^{d_1 \times d_2}$ be a fixed real matrix. Let $R \in \R^{d_1
\times d_2}$ be a random matrix such that $\E R = K$ and $\norm{R} \leq L$ a.s.
Put
\begin{align*}
  G := \frac{1}{n} \sum_{k=1}^{n} R_k, \qquad m_2(R) := \max\{ \norm{\E RR^\T}, \norm{\E R^\T R} \} \:,
\end{align*}
where each $R_k$ is an independent copy of $R$. Then for all $t \geq 0$,
\begin{align*}
    \Pr \left\{\norm{G - K} \geq t \right\}  \leq (d_1 + d_2) \exp\left( \frac{-nt^2/2}{m_2(R) + 2Lt/3} \right) \:.
\end{align*}
\end{theorem}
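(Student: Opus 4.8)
The statement is precisely the rectangular matrix Bernstein inequality, so the plan is to run the matrix Laplace transform method after a standard symmetrization. Write $X_k := \tfrac{1}{n}(R_k - K)$, so that $G - K = \sum_{k=1}^{n} X_k$ is a sum of $n$ independent, mean-zero random matrices; since $\norm{K} = \norm{\E R} \leq \E\norm{R} \leq L$, each summand obeys $\norm{X_k} \leq 2L/n$ almost surely. First I would pass to the Hermitian setting via the self-adjoint dilation $\mathcal{H}$, which sends $M \in \R^{d_1 \times d_2}$ to the $(d_1+d_2)\times(d_1+d_2)$ Hermitian matrix with zero diagonal blocks and off-diagonal blocks $M$ and $M^\T$. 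This map is linear, is isometric for the operator norm, and satisfies $\lambda_{\max}(\mathcal{H}(M)) = \norm{M}$, so that
\[
  \Pr\left\{ \norm{G - K} \geq t \right\} = \Pr\left\{ \lambda_{\max}\Big( \sum_{k=1}^{n} Y_k \Big) \geq t \right\}, \qquad Y_k := \mathcal{H}(X_k),
\]
where the $Y_k$ are independent, Hermitian, mean-zero, of dimension $d := d_1+d_2$, with $\norm{Y_k} \leq 2L/n$.

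Next I would invoke the master tail bound of the matrix Laplace transform method: for every $\theta > 0$,
\[
  \Pr\left\{ \lambda_{\max}\Big(\sum_k Y_k\Big) \geq t \right\} \leq e^{-\theta t}\, \E\Tr\exp\Big(\theta\sum_k Y_k\Big).
\]
The analytic core is Lieb's concavity theorem, which combined with independence and Jensen's inequality gives $\E\Tr\exp(\theta\sum_k Y_k) \leq \Tr\exp\big(\sum_k \log\E e^{\theta Y_k}\big)$. It then remains to control the matrix cumulant generating functions $\log\E e^{\theta Y_k}$. Using that $z \mapsto (e^z - 1 - z)/z^2$ is increasing (with value $1/2$ at the origin), the transfer rule for matrix functions together with $\E Y_k = 0$ and $\norm{Y_k} \leq 2L/n$ yields the Loewner bound $\log\E e^{\theta Y_k} \preceq g(\theta)\, \E Y_k^2$ for all $0 < \theta < \tfrac{3n}{2L}$, where $g(\theta) := \dfrac{\theta^2/2}{1 - 2L\theta/(3n)}$.

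Plugging this in and using monotonicity of the trace exponential gives $\E\Tr\exp(\theta\sum_k Y_k) \leq d\,\exp\!\big( g(\theta)\,\lambda_{\max}(\sum_k \E Y_k^2)\big)$. A direct computation of $\mathcal{H}(X_k)^2$ shows $\E Y_k^2 = \diag(\E X_k X_k^\T,\, \E X_k^\T X_k)$, so $\lambda_{\max}(\sum_k \E Y_k^2) = \max\{\norm{\sum_k \E X_k X_k^\T},\, \norm{\sum_k \E X_k^\T X_k}\}$, which using $\E(R_k-K)(R_k-K)^\T \preceq \E R_k R_k^\T$ (and its transpose) is at most $m_2(R)/n$. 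Finally I would optimize the bound $d\exp(-\theta t + g(\theta)m_2(R)/n)$ over $\theta \in (0, \tfrac{3n}{2L})$; the admissible choice $\theta = \dfrac{nt}{m_2(R) + 2Lt/3}$ makes the exponent equal $-\dfrac{nt^2/2}{m_2(R) + 2Lt/3}$ and yields exactly the claimed bound with $d = d_1 + d_2$. The main obstacle — where essentially all the work lives — is establishing the matrix mgf estimate $\log\E e^{\theta Y_k} \preceq g(\theta)\E Y_k^2$ in the Loewner order and threading Lieb's theorem correctly through the expectation; the dilation reduction and the scalar optimization over $\theta$ are routine once that is in hand.
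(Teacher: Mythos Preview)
Your proof sketch is correct and follows the standard route to the rectangular matrix Bernstein inequality: dilation to the Hermitian setting, the matrix Laplace transform combined with Lieb's concavity, the Bernstein-type mgf bound in the Loewner order, and finally the scalar optimization over $\theta$. Each step is handled properly, including the variance estimate $\lambda_{\max}(\sum_k \E Y_k^2) \leq m_2(R)/n$ via $\E(R-K)(R-K)^\T \preceq \E RR^\T$ and the choice $\theta = nt/(m_2(R) + 2Lt/3)$.

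However, the paper does not actually prove this statement at all: it is quoted verbatim as Corollary~6.2.1 of Tropp's monograph \cite{tropp15} and used as a black box to establish Lemma~\ref{lemma:randomfeatures_control}. So there is nothing to compare against in the paper itself. What you have written is essentially the proof Tropp gives in that reference, so in that sense you are aligned with the ultimate source; you have simply supplied what the paper chose to cite rather than reprove.
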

This variant allows us to easily establish the following lemma.
\begin{lemma}
\label{lemma:randomfeatures_control}
Fix an $\alpha \in (0, 1)$. Let $ZZ^\T \in \R^{n \times n}$ be from the random features construction.
Put $B := \sup_{x \in \X} \abs{\varphi(x, \omega)}$, and suppose $B < \infty$.
Then with probability at least $1 - \delta$, we have that
as long as $p \geq \frac{2}{\alpha} (\frac{1}{\alpha} + \frac{2}{3})\frac{n B^2}{\norm{K}} \log\left( \frac{2n}{\delta} \right)$,
\begin{align*}
  (1-\alpha) \norm{K} \leq \norm{ZZ^\T} \leq (1+\alpha) \norm{K} \:.
\end{align*}
\end{lemma}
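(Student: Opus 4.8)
The plan is to recognize $ZZ^\T$ as an empirical average of $p$ i.i.d.\ rank-one PSD matrices and apply the matrix Bernstein inequality of Theorem~\ref{thm:bernstein_tropp}. For each $k=1,\dots,p$ let $v_k := (\varphi(x_1,\omega_k),\dots,\varphi(x_n,\omega_k))^\T \in \R^n$, so that $Z = \frac{1}{\sqrt p}[v_1|\cdots|v_p]$ and hence $ZZ^\T = \frac{1}{p}\sum_{k=1}^p R_k$ with $R_k := v_k v_k^\T$. The $R_k$ are independent copies of $R := vv^\T$, where $v$ has $i$-th coordinate $\varphi(x_i,\omega)$, $\omega\sim\rho$, and the defining property $\E_\omega \varphi(x_i,\omega)\varphi(x_j,\omega)=\kernel(x_i,x_j)$ gives exactly $\E R = K$. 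So we are in the setting of Theorem~\ref{thm:bernstein_tropp} with $d_1=d_2=n$ and $G = ZZ^\T$.

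Next I would bound the two quantities feeding the Bernstein bound. For the almost-sure operator-norm bound, $\norm{R} = \norm{v}^2 = \sum_{i=1}^n \varphi(x_i,\omega)^2 \le nB^2$, so we may take $L = nB^2$. For the variance proxy, $R$ is symmetric, so $m_2(R) = \norm{\E R^2} = \norm{\E\big(\norm{v}^2 vv^\T\big)}$; using $\norm{v}^2 \le nB^2$ pointwise together with monotonicity of $A\mapsto\lambda_{\max}(A)$ on the PSD cone gives $\E\big(\norm{v}^2 vv^\T\big) \preceq nB^2\,\E vv^\T = nB^2 K$, hence $m_2(R) \le nB^2\norm{K}$.

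Plugging these into Theorem~\ref{thm:bernstein_tropp} yields, for every $t\ge 0$,
\beqs
  \Pr\{\norm{ZZ^\T - K}\ge t\} \le 2n\exp\!\left(\frac{-pt^2/2}{nB^2\norm{K} + \tfrac{2}{3}nB^2 t}\right).
\eeqs
I would then set $t = \alpha\norm{K}$, so that the magnitude of the exponent becomes $\frac{p\alpha^2\norm{K}}{2nB^2(1+\frac{2}{3}\alpha)}$, and require this to be at least $\log(2n/\delta)$; this rearranges, using $\frac{2(1+\frac23\alpha)}{\alpha^2} = \frac{2}{\alpha}\big(\frac1\alpha+\frac23\big)$, to precisely the hypothesis $p \ge \frac{2}{\alpha}\big(\frac1\alpha+\frac23\big)\frac{nB^2}{\norm{K}}\log(2n/\delta)$. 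On the complementary event $\norm{ZZ^\T - K}\le\alpha\norm{K}$, and the triangle inequality for the operator norm gives $\big|\,\norm{ZZ^\T}-\norm{K}\,\big|\le\alpha\norm{K}$, i.e. $(1-\alpha)\norm{K}\le\norm{ZZ^\T}\le(1+\alpha)\norm{K}$, which is the claim.

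The only mildly delicate point is the bound on $m_2(R)$: one should avoid crude estimates and instead exploit the pointwise bound $\norm{v}^2\le nB^2$ and the PSD ordering to collapse $\E(\norm{v}^2 vv^\T)$ down to $nB^2 K$; once that is in hand, the remainder is a routine substitution into the Bernstein tail bound and an algebraic matching of constants in the sample-size condition.
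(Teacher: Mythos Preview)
Your proposal is correct and follows essentially the same approach as the paper: represent $ZZ^\T$ as an average of i.i.d.\ rank-one matrices $v_kv_k^\T$ with mean $K$, bound $\norm{R}\le nB^2$ and $m_2(R)\le nB^2\norm{K}$ via the PSD ordering, apply Theorem~\ref{thm:bernstein_tropp}, set $t=\alpha\norm{K}$, and finish with the triangle inequality. The only differences are cosmetic (notation and slightly more explicit algebra in deriving the sample-size condition).
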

\begin{proof}
We set up parameters so we can invoke Theorem~\ref{thm:bernstein_tropp}. This
follows Section 6.5.5 of \citesup{tropp15}.  Define
$\xi_k = (\varphi(x_1, \omega_k), ..., \varphi(x_n, \omega_k)) \in \R^n$ and
$R_k = \xi_k \xi_k^\T$. This setting means that $\frac{1}{p} \sum_{k=1}^{p} R_k
= ZZ^\T$.  We have $\norm{R_k} = \norm{\xi_k \xi_k^\T} = \norm{\xi_k}^2 \leq
nB^2$.  Furthermore,
\begin{align*}
  \E R_k^2 = \E \norm{\xi_k}^2 \xi_k\xi_k^\T \preceq nB^2 \E \xi_k\xi_k^\T = nB^2 K \Longrightarrow m_2(R) \leq nB^2 \norm{K} \:.
\end{align*}
Hence by Theorem~\ref{thm:bernstein_tropp},
\begin{align*}
    \Pr\left\{ \norm{ ZZ^\T - K} \geq t \right\} \leq 2n \exp\left( \frac{-pt^2/2}{nB^2\norm{K} + 2nB^2 t/3} \right) \:.
\end{align*}
Setting $t = \alpha \norm{K}$, we require that $p \geq \frac{2}{\alpha}
(\frac{1}{\alpha} + \frac{2}{3})\frac{n B^2}{\norm{K}} \log\left(
\frac{2n}{\delta} \right)$ to ensure that $\Pr\left\{ \norm{ ZZ^\T - K} \geq
\alpha\norm{K} \right\} \leq \delta$.  On the complement on this event, we have
that $\norm{ZZ^\T} \leq \norm{K} + \norm{ZZ^\T - K} \leq (1+\alpha)\norm{K}$.
Similarly, $\norm{K} \leq \norm{ZZ^\T} + \norm{K - ZZ^\T} \leq \norm{ZZ^\T} + \alpha \norm{K}$.
The result now follows.
\end{proof}

\end{document}